\newcommand{\X}{\mathcal{X}}
\newcommand{\Y}{\mathcal{Y}}
\newcommand{\Pd}{\mathbb{P}}
\newcommand{\Qd}{\mathbb{Q}}
\newcommand{\Rdx}{\mathbb{R}^{d_x}}
\newcommand{\Rdy}{\mathbb{R}^{d_y}}
\newcommand{\R}{\mathbb{R}}
\newcommand{\x}{\textbf{x}}
\newcommand{\y}{\textbf{y}}
\newcommand{\op}[1]{\operatorname{#1}}
\newcommand{\argmin}{\operatornamewithlimits{argmin}}
\newcommand{\argmax}{\operatornamewithlimits{argmax}}
\newcommand{\<}{\langle}
\renewcommand{\>}{\rangle}
\newcommand*{\eqdef}{\stackrel{\text{\normalfont def}}{=}}
\newtheorem{theorem}{Theorem}[section]
\newtheorem{lemma}[theorem]{Lemma}
\newcommand{\bbP}{\mathbb{P}}
\newcommand{\bbQ}{\mathbb{Q}}
\newcommand{\cP}{\mathcal{P}}
\title{Uncovering Challenges of Solving the Continuous Gromov-Wasserstein Problem}
\author{Xavier Aramayo C.\\
Applied AI Institute\thanks{\texttt{\{x.aramayo,p.mokrov,e.burnaev\}@applied-ai.ru}}\\
Moscow, Russia \\
\And
Maksim Nekrashevich \\
Nebius AI\thanks{\texttt{nekrashevich@nebius.ai}}\\
Amsterdam, Netherlands \\
\And
Petr Mokrov \\
Applied AI Institute$^{*}$\\
Moscow, Russia \\
\AND
Evgeny Burnaev \\
Applied AI Institute$^{*}$\\
Moscow, Russia \\
\And
\hspace{-37mm}Alexander Korotin \\
\hspace{-37mm}Applied AI Institute\thanks{\texttt{iamalexkorotin@gmail.com}}\\
\hspace{-37mm}Moscow, Russia \\
}
\begin{document}

\maketitle

\vspace{-5mm}
\begin{abstract}
\vspace{-2mm}
  Recently, the Gromov-Wasserstein Optimal Transport (GWOT) problem has attracted the special attention of the ML community. In this problem, given two distributions supported on two (possibly different) spaces, one has to find the most isometric map between them. In the discrete variant of GWOT, the task is to learn an assignment between given discrete sets of points. In the more advanced continuous formulation, one aims at recovering a parametric mapping between unknown continuous distributions based on i.i.d. samples derived from them. The clear geometrical intuition behind the GWOT makes it a natural choice for several practical use cases, giving rise to a number of proposed solvers. Some of them claim to solve the continuous version of the problem. At the same time, GWOT is notoriously hard, both theoretically and numerically. Moreover, all existing continuous GWOT solvers still heavily rely on discrete techniques. Natural questions arise: to what extent do existing methods unravel the GWOT problem, what difficulties do they encounter, and under which conditions are they successful? Our benchmark paper is an attempt to answer these questions. We specifically focus on the continuous GWOT as the most interesting and debatable setup. We crash-test existing continuous GWOT approaches on different scenarios, carefully record and analyze the obtained results, and identify issues. Our findings experimentally testify that the scientific community is still missing a reliable continuous GWOT solver, which necessitates further research efforts. As the first step in this direction, we propose a new continuous GWOT method which does not rely on discrete techniques and partially solves some of the problems of the competitors.
\end{abstract}

\addtocontents{toc}{\protect\setcounter{tocdepth}{0}}

\vspace{-6mm}
\section{Introduction}
\label{sec:introduction}
\vspace{-3mm}

Optimal Transport (OT) is a powerful framework that is widely used in machine learning \citep{montesuma2023recent}. A popular application of OT is the domain adaptation of various modalities, including images \citep{courty2016optimal,luo2018wgan,redko2019optimal}, music transcription \citep{flamary2016optimal}, color transfer \citep{frigo2015optimal}, alignment of embedding spaces \citep{chen2020graph,aboagye2022quantized}. Other applications include generative models \citep{salimans2018improving,arjovsky2017wasserstein}, unpaired image-to-image transfer \citep{korotin2023neural,rout2022generative}, etc.

\vspace{-2mm}
\begin{figure}[!th]
\centering
	\begin{subfigure}[t]{.44\linewidth}
		\centering
		\includegraphics[width=\linewidth]{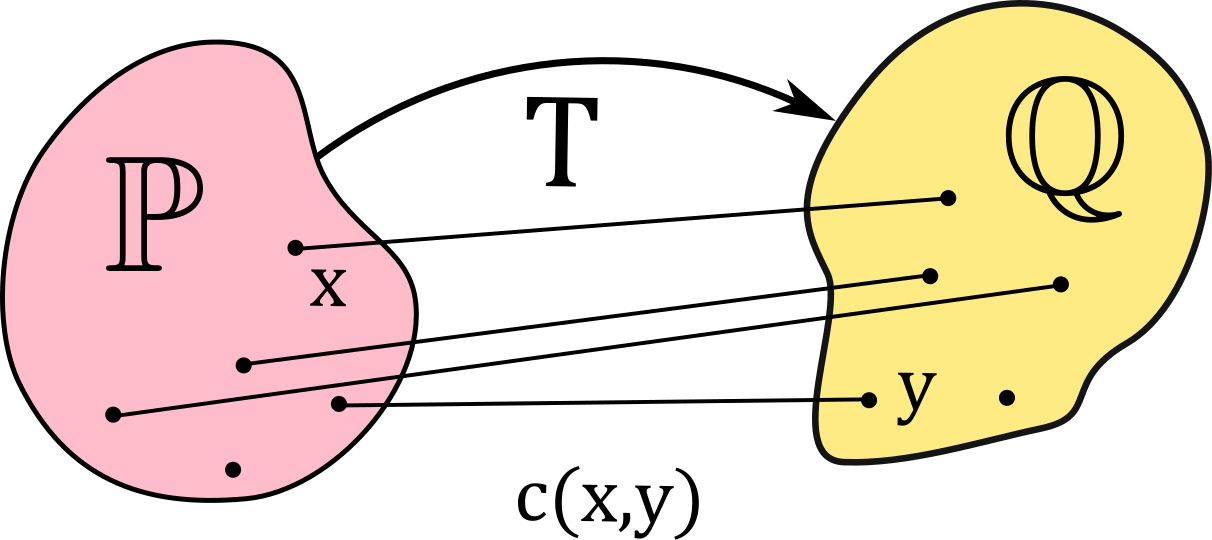}
		\caption{\centering Monge's OT between distributions $\mathbb{P}$ and $\mathbb{Q}$\protect\linebreak with \textit{inter-domain} cost function $c(x,y)$.}
		\label{fig:OT-OLD}
	\end{subfigure}%
	\hspace{1em}
	\begin{subfigure}[t]{.44\linewidth}
		\centering
		\includegraphics[width=\linewidth]{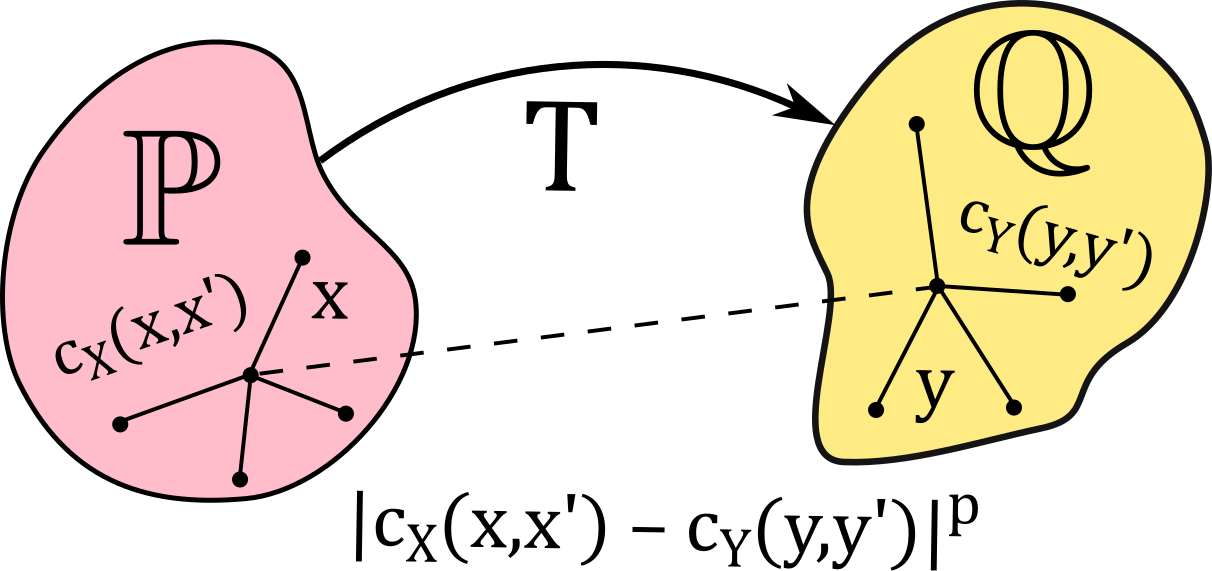}
		\caption{Monge's GW between distributions $\mathbb{P}$ and $\mathbb{Q}$ \protect\linebreak with \textit{intra-domain} costs $c_{\mathcal{X}}(x,x')$ and $c_{\mathcal{Y}}(y,y')$.}
		\label{fig:GWOT-OLD}
	\end{subfigure}%
    \vspace{-3mm}
	\caption{A schematic visualization of the OT problems and GW problems (Monge's form).}
 \vspace{-2mm}
\end{figure}

\vspace{1mm}
In the conventional OT problem (Figure \ref{fig:OT-OLD}), one needs to find a map between two data distributions that minimizes a certain ``effort'' expressed in the form of an \emph{inter-domain} transport cost function. This cost function shows how hard it is to move a point in the source space to a given point in the target space. Thus, in order for the resulting map to possess certain useful properties, one has to incorporate them into the cost function. Unfortunately, this is not always a straightforward task, especially when the data distributions are supported in different spaces.

\vspace{-1mm}
A popular way to address the above-mentioned issue is to consider the Gromov-Wasserstein (GW) modification \citep{memoli2007use, memoli2011gromov, peyre2016gromov} of the OT problem (Figure \ref{fig:GWOT-OLD}). Here one assumes that both the source and target spaces are equipped with a structure, e.g., with a metric, and one aims to find a transport map that maximally preserves this structure, i.e., the most isometric map. This clear geometrical intuition behind GW makes it natural in various applications: unsupervised data alignment \citep{alvarez2018gromov,aboagye2022quantized}, single-cell data processing \citep{scetbon2022linear, klein2023generative, sebbouh2024structured, pmlr-v238-demetci24a, ryu2024crossmodality}, 2D and 3D shape analysis \citep{beier2022linear, memoli2009spectral}, graph data analysis \citep{xu2019scalable, xu2021learning, vincent-cuaz2022semirelaxed, chowdhury2021generalized, vincent2021online} and dimensionality reduction \citep{vanassel2024distributionalreductionunifyingdimensionality}.

\vspace{-1mm}
\textbf{Discrete/continuous GW}. The GWOT problem is about learning some specific translation that operates with source and target distributions. In practice, these distributions are typically given by empirical datasets. This leads to two possible ways of paving a GWOT map. In the \textbf{discrete} scenario, the learned translation is just a point(s)-to-point(s) assignment (transport matrix). In turn, the \textbf{continuous} GW is about learning a parametric mapping between the underlining (continuous) distributions. In this case, the datasets are treated as i.i.d. samples derived from them.
\vspace{0.6mm}\newline
While existing computational approaches for the GW problem show considerable empirical success, the problem itself is highly non-trivial from different perspectives.
\vspace{-2mm}
\begin{itemize}[leftmargin=*]
    \item \textbf{Theory.} Finding the most isometric map between probability spaces based just on the inner properties of these spaces may be poorly defined, e.g., the desired transform may be non-unique. This happens where the source (target) space permits some non-trivial isometries that preserve the corresponding source (target) distribution. A simple yet expressive example is the Gaussian case \cite{delon2022gromov}. Intuitively, non-uniqueness may affect the stability of a GWOT solver.
    
    \vspace{-1mm}
    \item \textbf{Computations/algorithms.} It is known that the discrete GW yields a non-convex quadratic optimization problem \citep{vayer2020contribution}, which is computationally challenging. To partially alleviate the difficulty, one typical approach is to consider entropic regularization \citep{peyre2016gromov, alvarez2018gromov, scetbon2022linear, wang2023neural}. Fortunately, the regularized problem resorts to a sequence of tractable Sinkhorn OT assignments. However, the convergence of the procedure may not hold, see \citep[Remark 3]{peyre2016gromov}. In addition, discrete GWOT scales poorly with the number of input (source or target) samples, making some problem setups unmanageable by such solvers. While there are techniques to reduce the computational burden w.r.t. data size \citep{scetbon2022linear}, they come at the cost of additional restrictions and assumptions.
    
    \vspace{-1mm}
    \item \textbf{Methodology.} The majority of existing continuous GW methods are based on discrete GW techniques and inherit all the computational challenges of the latter. Moreover, the transition from the discrete to the continuous setup may be questionable from a statistical point of view \citep{zhang2024gromov}.
\end{itemize}

\vspace{-2mm}
Having said that, one naturally wonders: how do the current continuous GWOT methods manage to overcome these problems and show good practical results? What are the ``bad cases'' under which the aforementioned difficulties become critical and the solvers struggle? How to fight with these ``bad cases''? In our paper, we shed light on these GWOT methods' ambiguities, specifically focusing on the continuous setup for the inner product and cosine distance cost functionals. \underline{\textbf{Our contributions:}}

\vspace{-3mm}
\begin{itemize}[leftmargin=*]
    \item We conduct a deep analysis of existing papers and reveal that one important characteristic that may greatly affect practical performance is the considered data setup. In fact, the majority of works primarily consider datasets with some specific correlations between source and target samples. Formally speaking, such setups disobey the standard i.i.d. assumption on the data and may lead to ambiguous conclusions on the solvers' capabilities.%

    \vspace{-1mm}
    \item By following the findings from the previous point, we evaluate the performance of existing continuous inner product and cosine distance based GWOT solvers in more statistically fair and practically realistic \textit{uncorrelated} data setups. Our simple yet expressive experiments witness that \textit{(un)correlatedness} indeed highly influences the quality of the learned GWOT maps. Changing the data setup may greatly deteriorate the performance of the solvers.

    \vspace{-1mm}
    \item To alleviate the dependence on the mutual statistical characteristics of the source and target training data, we propose a novel continuous neural GW solver operating in the inner product setting. On the one hand, our method is not based on discrete GW. It may be learned on arbitrarily large datasets and shows reasonably good results even on the fair \textit{uncorrelated} data setup. On the other hand, the method is $\min$-$\max$-$\min$ adversarial, which negatively impacts stability and requires plenty of data for training.
\end{itemize}

\vspace{-3mm}
Overall, our findings reveal that existing GWOT solvers can be effective for certain applications when correlated data is available—though such conditions may be impractical for many real-world GW tasks. However, these methods fail to learn meaningful transport plans when this correlation is absent. To address this limitation, we propose a solver capable of handling fully uncorrelated setups, enabling the GWOT approach to be practically useful in more realistic and challenging scenarios.%

\vspace{-1mm}
\textbf{Notations.}
Throughout the paper, $\Rdx$ and $\Rdy$ are the source and target data spaces, respectively. The set of Borel probability distributions on $\Rdx$ is $\cP(\Rdx)$. The dot product of vectors $\x, \x' \in \Rdx$ is $\<\x, \x'\>_{d_x}$. For a measurable map $T \colon \Rdx \to \Rdy$, we denote the corresponding \emph{push-forward} operator by $T_\sharp$. 
For $\bbP \in \cP(\Rdx)$ and $\bbQ \in \cP(\Rdy)$, we denote the set of all couplings between them by $\Pi(\Pd, \Qd)$, i.e., distributions $\pi$ on $\Rdx \times \Rdy$ with the corresponding marginals equal to $\Pd$ and $\Qd$. %
\vspace{-3mm}

\section{Background}
\label{sec:background}

\vspace{-2mm}
In this section, we first explain the conventional OT setup \citep{villani2008optimal, santambrogio, gozlan2017kantorovich, backhoff2019existence} and then introduce the Gromov-Wasserstein OT formulation \citep{memoli2011gromov, peyre2016gromov}. Finally, we clarify our considered practical learning setup under which these problems are considered.

\vspace{-3mm}
\subsection{Optimal Transport (OT) problem}
\label{sec:conventionalOT}

\vspace{-2mm}
Given two probability distributions $\Pd \in \cP(\Rdx)$, $\Qd \in \cP(\Rdy)$ and a cost function $c\colon\Rdx\times\Rdy\rightarrow\R$, the OT problem is defined as follows:

\vspace{-4mm}
\begin{equation}
    \text{OT}_{c}(\mathbb{P},\mathbb{Q})\eqdef\inf_{T_{\sharp} \bbP = \bbQ}\int_{\mathbb{R}^{d_x}}c(\textbf{x},T(\textbf{x}))d\mathbb{P}(\textbf{x}).
    \label{eq:monge-ot}
\end{equation}

\vspace{-1mm}
This is known as Monge's formulation of the OT problem. Intuitively, it can be understood as finding an optimal transport map $T^* : \Rdx \rightarrow \Rdy$ that transforms $\bbP$ to $\bbQ$ and minimizes the total transportation expenses w.r.t. cost $c$, see Figure \ref{fig:OT-OLD}. There have been developed a lot of methods for solving OT \eqref{eq:monge-ot} in its discrete \citep{cuturi2013sinkhorn, peyre2019computational} and continuous \citep{makkuva2020optimal, daniels2021score, korotin2023neural, choi2023generative, fan2023neural, uscidda2023monge, gushchin2024entropic, mokrov2024energyguided, asadulaev2024neural} variants.

\vspace{-1mm}
The cost function $c$ in \eqref{eq:monge-ot} is commonly the squared Euclidean distance, making problem \eqref{eq:monge-ot} exclusively defined for spaces of equal dimension. Handling incomparable spaces ($d_x \neq d_y$) requires a manually defined inter-domain cost, which is a nontrivial task \citep{wang2025quadraticformoptimaltransport}.

\vspace{-2mm}
\subsection{Gromov-Wasserstein OT (GWOT) problem}
\label{sec:GWOT}

\vspace{-1mm}
The GWOT problem is an extension of the optimal transport problem that aims to compare and transport probability distributions supported on different spaces. This problem is particularly useful when the underlying spaces do not align directly, but we still want to measure and align their intrinsic geometric structures. In what follows, we introduce the discrete and continuous variants of GWOT.

\vspace{-1mm}
\textbf{Discrete Gromov-Wasserstein formulation.}
Let $N_x$ and $N_y$ be the number of training samples in the source and target domains, respectively. Let $\textbf{C}^x \in \mathbb{R}^{N_x \times N_x}$ and $\textbf{C}^y \in \mathbb{R}^{N_y \times N_y}$ be the corresponding source and target intra-domain cost matrices. These matrices measure the pairwise distance or similarity between the samples for a given function, i.e., cosine similarity, Euclidean distance, inner product, etc. The discrete GWOT problem is:

\vspace{-3mm}
\begin{equation}
    \textbf{T}^* \eqdef \argmin_{\textbf{T} \in \mathcal{C}_{N_x, N_y}} \,\sum_{i,j,k,l} \vert \textbf{C}^x_{i,k} - \textbf{C}^y_{j,l} \vert^p\textbf{T}_{i,j}\textbf{T}_{k,l},
    \label{eq:gwot-problem-discrete}
\end{equation}

\vspace{-2mm}
where ${\mathcal{C}_{N_x, N_y} \!\eqdef\! \{ \textbf{T} \in \mathbb{R}^{N_x \times N_y}_+ \big\vert \textbf{T}^T \mathds{1} = \frac{1}{N_y}\mathds{1} ; \textbf{T} \mathds{1} = \frac{1}{N_x}\mathds{1} \}}$ is the set of coupling matrices between source and target spaces; $\mathds{1} = [1, \dots, 1]^T \in \R^N, N \in \{N_x, N_y\}$.
The loss function $\vert \cdot - \cdot \vert^p$ in \eqref{eq:gwot-problem-discrete} is used to account for the misfit between the similarity matrices, a typical choice for the degree factor is $p = 2$ (quadratic loss). %
Further details can be found in \citep{peyre2016gromov, memoli2011gromov, chowdhury2019gromov, titouan2019sliced}.

\vspace{-1mm}
\textbf{Continuous Gromov-Wasserstein formulation.} Let $\mathbb{P} \in \cP(\Rdx)$, $\mathbb{Q} \in \cP(\Rdy)$ be two distributions. Let $c_{\mathcal{X}}:\Rdx \times \Rdx \rightarrow\mathbb{R}$ and  $c_{\mathcal{Y}}:\Rdy \times \Rdy \rightarrow\mathbb{R}$ be two intra-domain cost functions for the source ($\mathbb{R}^{d_x}$) and target ($\mathbb{R}^{d_y}$) domains, respectively. The Monge's GWOT problem is defined as:

\vspace{-2mm}
\begin{equation}
    \text{GWOT}^p_p(\mathbb{P}, \mathbb{Q}) \eqdef 
    \inf_{T_{\sharp} \bbP = \bbQ} \!\!\!\!\!\!\!\int\limits_{\Rdx\times\Rdx}\!\!\!\!\!\!\!\! \left|c_{\X}(\x, \x')\!-\!c_{\Y}(T(\x), T(\x'))\right|^p\!\!d\Pd(\x)d\Pd(\x').
    \label{eq:gwot-problem}
\end{equation}

\vspace{-2mm}
Theoretical results on the existence and regularity of \eqref{eq:gwot-problem} under certain cases could be found in \citep{dumont2024existence, memoli2024comparison, memoli2022distance}.
An illustration of problem \eqref{eq:gwot-problem} can be found in Figure \ref{fig:GWOT-OLD}. In this continuous setup, the objective is to find an optimal transport map $T^* : \Rdx \rightarrow \Rdy$ that allows to transform (align) the source distribution to the target distribution. While in \eqref{eq:monge-ot} we search for a map that sends $\Pd$ to $\Qd$ minimizing the total transport cost, \eqref{eq:gwot-problem} aims to find the most isometric map w.r.t. the costs $c_\X$ and $c_\Y$, i.e., the map that maximally preserves the pairwise intra-domain costs. The commonly studied case \citep{vayer2020contribution,sebbouh2024structured} is $p=2$ with the Euclidean distance $c(\cdot,\cdot)=\|\cdot-\cdot\|^2$ or inner product $c(\cdot,\cdot)=\langle\cdot,\cdot\rangle$ as intra-domain cost functions. In what follows, we will use \textbf{innerGW} to denote the latter case.
 
\vspace{-3mm}
\subsection{Practical Learning Setup}\label{subsec:training-setup}

\vspace{-2mm}
In practical scenarios, the source and target distributions $\mathbb{P}$ and $\mathbb{Q}$ are typically accessible by empirical samples (datasets) $X = \{ \x_i\}_{i = 1}^{N_x} \sim \bbP$ and $Y = \{ \y_i \}_{i = 1}^{N_y} \sim \bbQ$. Under the \textbf{discrete} GWOT formulation, these samples are directly used to compute intra-domain cost matrices $\textbf{C}^x$, $\textbf{C}^y$. These matrices are then fed to optimization problem \eqref{eq:gwot-problem-discrete}. Having been solved, problem \eqref{eq:gwot-problem-discrete} yields a coupling matrix $\textbf{T}^*$ which defines the GWOT correspondence between $X$ and $Y$. Importantly, discrete GWOT operates with discrete empirical measures $\hat{\bbP} \eqdef \sum_{i = 1}^{N_x} \frac{1}{N_x} \delta(\x_i)$, $\hat{\bbQ} \eqdef \sum_{i = 1}^{N_y} \frac{1}{N_y} \delta(\y_i)$ rather than original ones.
In turn, under the \textbf{continuous} formulation, the aim is to recover some parametric map $T^* : \R^{d_x} \rightarrow \R^{d_y}$ between the original source and target distributions $\bbP$ and $\bbQ$.  
In most practical scenarios, the latter is preferable, as it naturally allows \textit{out-of-sample} estimation, i.e., provides GWOT mapping for new (unseen) samples $\x \sim \bbP$. \underline{In our paper, we deal with continuous setup}.

\vspace{-3mm}
\section{Continuous Gromov-Wasserstein solvers}
\label{sec:related_work}

\vspace{-2mm}
Here we outline the current progress in solving the GWOT problem specifically focusing on the continuous formulation. Most of the GWOT solvers are only discrete or adapted to \textit{emulate a continuous behavior} by implementing some specific out-of-sample estimation method on top of the results of some discrete solver. The initial approach to solve the GWOT problem in discrete case (\wasyparagraph \ref{sec:GWOT}) was introduced in \citep{memoli2011gromov, peyre2016gromov}. Below we only detail the methods which specifically aim to solve the continuous formulation and provide the out-of-sample estimation. The \underline{sanity-check} of the solvers' implementations on a toy $\R^3\rightarrow\R^2$ problem is available in Appendix \ref{ap:toy3Dto2D}.

\vspace{-1mm}
\textbf{StructuredGW \citep{sebbouh2024structured}}.
It introduces an iterative algorithm for solving the discrete entropy-regularized inner product case of the GW problem with $p=2$, using the reformulation from \citep[maxOT]{vayer2020contribution}. At each step, the coupling matrix $\mathbf{T}$ is updated via Sinkhorn iterations, alongside an auxiliary rotation matrix is updated by various methods depending on the regularization. For out-of-sample estimation, the method uses entropic maps \citep{pooladian2024entropic, dumont2024existence}, using a learned dual potential to perform the mapping.

\vspace{-1mm}
\textbf{FlowGW \citep{klein2023generative}.} The proposed framework consists in fitting a discrete GW solver inspired by \citep{peyre2016gromov} to obtain a coupling matrix $\textbf{T}$.  This coupling matrix helps to figure out the best way to match available samples from source to target domains. Weighted pairs of source and target samples are constructed using the distribution described by the coupling matrix. Then these samples are used to train a Conditional Flow Matching (CFM) model \citep{lipman2023flow} with the noise outsourcing technique from \citep{kallenberg1997foundations}. The inference process is done by solving the ODE given by the CFM model.

\vspace{-1mm}
\textbf{AlignGW  \citep{alvarez2018gromov}}. This work proposes a discrete solver for the alignment of word embeddings. The authors use  Sinkhorn iterations to compute the updates on the coupling matrix instead of a linear search implemented in the Python Optimal Transport library \citep{flamary2021pot} %
that is inspired by \citep{titouan2019optimal, peyre2016gromov}. This change significantly improves the stability of the solver. In spite of being a totally discrete method, we consider it due to its performance in challenging tasks. In order to allow out-of-sample estimations, we train a Multi-Layer Perceptron (MLP) model on the barycentric projections derived from the learned coupling matrix $\textbf{T}$.

\vspace{-1mm}
\textbf{RegGW \citep{uscidda2024disentangledrepresentationlearninggeometry}}. This work extends the Monge Gap Regularizer \citep{uscidda2023monge} to the Gromov-Wasserstein setting by parameterizing the transport push-forward function $T$ with a neural network. The model is trained using a regularized loss that combines a \textit{fitting} term—chosen as the Sinkhorn divergence \citep{uscidda2024disentangledrepresentationlearninggeometry}—to align the mapped points with the target distribution, and the Gromov-Monge gap regularizer, defined as the difference between the distortion of $T$ (i.e., its GW cost) and the true discrete GW solution. Unlike other methods such as \citep{sotiropoulou2024stronglyisomorphicneuraloptimal}, this approach does not require access to an intermediate reference distribution, making it more practical for our experimental setting.%

\vspace{-1mm}
\textbf{CycleGW \citep{Zhang2021CycleCP}}.
The authors propose minimizing the Unbalanced Bidirectional Gromov–Monge Divergence (UBGMD) to recover two push-forward maps: $f$ pushing $\bbP$ toward $\bbQ$, and $g$ doing the reverse. This is conceptually related to the Unbalanced Gromov-Wasserstein divergence \cite{sejourne2020unbalanced}. Their approach involves minimizing a Generalized Maximum Mean Discrepancy (GMMD), computed using MMD with Gaussian kernels. In our setup, $f$ corresponds to our mapping function $T$. A related method from \cite{2021arXiv210914090H} also incorporates an MMD term, but due to its similarity and lack of publicly available code, we focus on \cite{Zhang2021CycleCP} in our experiments.
\vspace{-4.5mm}
\section{Limitations of existing methods}

\vspace{-3mm}
As mentioned before, the majority of existing Gromov-Wasserstein approaches rely on discrete GWOT formulation, see \wasyparagraph \ref{sec:GWOT}. Solving \eqref{eq:gwot-problem-discrete} becomes increasingly computationally expensive as the training sample sizes $N_x$ and $N_y$ grow. This renders some datasets to hardly be manageable by discrete solvers.
However, in our paper, we specifically focus on the other sources of potential failures for GWOT, which stem from peculiarities of considered practical data setups. 
Below, we give a detailed description of this problem.

\vspace{-3mm}
\subsection{Pitfalls of practical data setup}\label{subsec:correlated_setups}

\vspace{-1mm}
To begin with, we introduce the notion of \textbf{(un)correlatedness} of data which undergoes Gromov-Wasserstein alignment.
To fit a GWOT solver, a practitioner typically has two training datasets, $X = \{ \x_i\}_{i = 1}^{N_x}$ and $Y = \{ \y_i \}_{i = 1}^{N_y}$. They are sampled from the reference source ($\bbP$) and target ($\bbQ$) distributions, see our training setup, \wasyparagraph \ref{subsec:training-setup}. In what follows, without loss of generality, we will assume $N_x = N_y \eqdef N$. The natural statistical assumption on samples $X$ and $Y$ is that they are mutually independent.
We define this data setup as \textit{uncorrelated}. Simultaneously, we introduce an alternative setup under which the source and target datasets $X$ and $Y$ turn out to be connected by some specific statistical relationships. Let us assume that samples $X$ and $Y$ are generated according to a joint distribution $\pi \in \Pi(\Pd, \Qd)$, $\pi\ne \Pd\otimes \Qd$. Practically, we expect that samples from this distribution are meaningfully dependent; i.e., $\pi$ is “significantly” different from the independent coupling $\Pd\otimes \Qd$ due to the structural similarities present in the respective intra-domain geometries. In particular, coupling $\pi$ may even set a one-to-one correspondence between the domains. Then, we call the training samples $X$ and $Y$ to be \textit{correlated} if they are obtained with the following procedure:

\vspace{-3mm}
\begin{enumerate}[leftmargin=*]
    \item First, we jointly sample $X$ and $\widetilde{Y} = \{\widetilde{\y}_{i} \}_{i = 1}^{N} \subset \Rdy$ from coupling $\pi$, i.e.: $X \times \widetilde{Y} \sim \pi$.

    \vspace{-2mm}
    \item Secondly, we apply \textit{an unknown permutation} $\sigma$ of indices to $\widetilde{Y}$ yielding $Y$, i.e.: $Y = \sigma \circ \widetilde{Y}$. 
\end{enumerate}
\vspace{-2mm}
We found that the majority of the experimental test cases, on which the existing GWOT solvers are validated, frequently follow exactly the \textit{correlated} data setup. For instance, in the problem of learning cross-lingual word embedding correspondence \citep{alvarez2018gromov, grave2019unsupervised}, the underlining coupling $\pi$ could be understood as the distribution of dictionary pairs. The other example is the bone marrow dataset \citep{luecken2021a, klein2023generative}. In this case, the source and target samples are generated using two different methods to profile gene expressions on the same donors.
Interestingly, the \textit{correlated} data setup suits well the discrete GWOT formulation, because the optimization problem in this case boils down to the search for the permutation $\sigma$ that spawned the target dataset $Y$. This leads to the natural \textbf{hypothesis} that for \textit{uncorrelated} training datasets $X$ and $Y$ the performance of existing GWOT solvers may be poor. To test this hypothesis, we propose the experimental framework described in the next paragraphs.

\begin{wrapfigure}{r}{0.5\textwidth}
\vspace{-5mm}
    \begin{subfigure}{0.5\textwidth}
    \centering
    \includegraphics[scale=1.0]{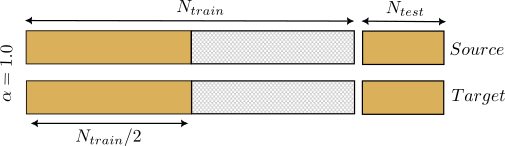} 
    \vspace*{-5mm}\caption{\centering\small Totally correlated setup.}
    \label{fig:totally}
    \end{subfigure}
    \vspace*{-3mm}\newline
    \begin{subfigure}{0.5\textwidth}
    \centering
    \includegraphics[scale=1.0]{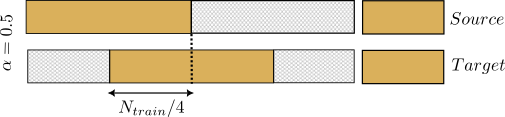}
    \vspace*{-5mm}\caption{\centering Partially correlated setup.}
    \label{fig:partially}
    \end{subfigure}
    \vspace*{-3mm}\newline
    \begin{subfigure}{0.5\textwidth}
    \centering
    \includegraphics[scale=1.0]{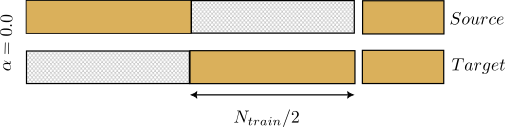}
    \vspace*{-6mm}\caption{Uncorrelated setup.}
    \label{fig:unpaired}
    \end{subfigure}
    \vspace*{-7mm}\caption{ Data splitting and (un)correlatedness.}
    \vspace{-4mm}
    \label{fig:pairness}
\end{wrapfigure}

\textbf{Modeling (un)correlatedness in practice.}
Let $X=\{\x_1,\dots,\x_N\}\sim \Pd$ and ${Y=\{\y_1,\dots,\y_N\}\sim \Qd}$ be the source and target datasets. We assume that $X$ and $Y$ are totally paired, i.e., every $i$-th vector in the source set $X$ is the pair of the $i$-th vector in the target set, $Y$. Also, we suppose that the pairing is reasonable, i.e., dictated by the nature of the data on hand. For example, if $X$ and $Y$ are word embeddings, then $\x_i$ and $\y_i$ correspond to the same word. We propose a way how to create training data with different levels of correlatedness. Initially, batches of $N$ paired (source and target) samples are randomly selected from the datasets, then split into train and test sets, $N=N_{train}+N_{test}$. The train samples are then divided into two halves and a value $\alpha$, $0\le \alpha\le 1$ is set, this value represents the fraction of $N_{train}/2$ samples that will be paired.

\vspace{-0.8mm}
The resulting training datasets (both source and target) will totally contain $N_{train}/2$ samples. They are formed by selecting specific indices from the original train sets. Indices from $0$ to $N_{train}/2$ are taken from the source while indices from target are shifted, we take the $\lceil (1-\alpha)(N_{train}/2)\rceil$ to $\lceil (1-\alpha/2)N_{train}\rceil$ indices. Therefore, a value of $\alpha=1$ will represent a \textit{totally correlated} setup (Figure \ref{fig:totally}), $0<\alpha<1$ is \textit{partially correlated} (Figure \ref{fig:partially}) and $\alpha=0$ is \textit{uncorrelated} (Figure \ref{fig:unpaired}).

\vspace{-0.8mm}
To conclude the subsection, we want to emphasize that both \textit{correlated} and \textit{uncorrelated} setups are practically important. Some real-world use cases of the former include word embedding assignment and gene expression profiles matching problems, see the details in the text above. In turn, the \textit{uncorrelated} setup naturally appears when aligning single-cell multi-omics data \citep{demetci2022scot}. Here one aims at matching different single-cell assays, which are uncorrelated, because applying multiple assays on the same single-cell is typically impossible.

\vspace{-3mm}

\subsection{Benchmarking GWOT solvers on (un)correlated data: GloVe and BPEmb Experiments}
\label{subsec:benchmark}

\vspace{-2mm}
In order to check how continuous GW solvers perform under uncorrelated, partially and totally correlated setups, we utilize two different text corpora: Twitter\footnote{https://radimrehurek.com/gensim/downloader.html} and MUSE\footnote{https://github.com/facebookresearch/MUSE} (Multilingual Unsupervised and Supervised Embeddings) \citep{DBLP:journals/corr/abs-1710-04087} bilingual vocabularies. We then embed them using either the GloVe (Global Vectors for Word Representation) algorithm \citep{pennington2014glove} or  BPEmb (Byte-Pair) \citep{heinzerling2018bpemb} embeddings. For now onward we refer as Twitter-GloVe and MUSE-BPEmb to the datasets of embeddings. Further details regarding the motivation to use these embeddings can be found in Appendix \ref{ap:metrics_embeddings}.

\vspace{-1mm}
For the Twitter-GloVe dataset, we define the \textit{whole} data space as the first 400K word embeddings from a total of approximately 1.2 million. In the case of MUSE-BPEmb, the \textit{whole} data space consists of 90K English-language samples. Results for the $100 \rightarrow 50$ dimensional setup appear in the main text, with \ul{additional configurations} presented in Appendix~\ref{ap:experiments}.

\vspace{-1mm}
We test \textbf{three} baseline solvers introduced in \wasyparagraph\ \ref{sec:related_work}: StructuredGW, AlignGW, and FlowGW. We fit each solver for different values of $\alpha$ (levels of correlatedness), ranging from 0.0 to 1.0 in increments of 0.1. For each $\alpha$, we perform \textbf{ten} repetitions with different random seeds following the process described in \wasyparagraph\ \ref{subsec:correlated_setups}. We use $N_{\text{train}} = 6$K, i.e., each run uses source and target datasets containing $N_{\text{train}}/2 = 3$K samples; $N_{\text{test}} = 2048$. The only reason for this relatively small training size is the computational complexity of the three solvers. The discrete optimization procedures underlying them cannot scale to significantly larger $N_{\text{train}}$. The training of RegGW and CycleGW baselines from \S\ \ref{sec:related_work} fails with small datasets. These methods are therefore deferred to \S\ \ref{sec:exp_neuralgw}, where larger datasets are considered. The results are shown in Figure \ref{fig:discrete_main_text}. We refer to Appendix \ref{ap:experiments} for \ul{additional experimental setups}.

\begin{figure*}[h]

    \begin{subfigure}{1\textwidth}
    \centering
    \includegraphics[scale=0.27]{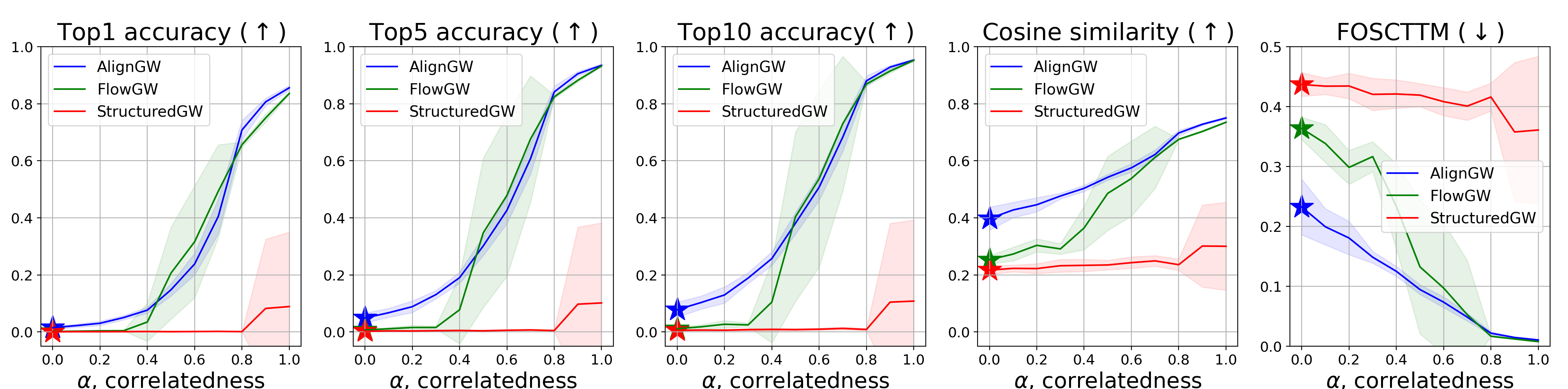} 
    \vspace{-1mm}\caption{\small Twitter-GloVe dataset ($\text{Source:}100\rightarrow \text{Target:}50$)}
    \label{fig:discrete-twitter-glove-100-50}
    \hspace{2mm}
    \end{subfigure}

    \vspace{-4mm}\begin{subfigure}{1\textwidth}
    \centering
    \includegraphics[scale=0.27]{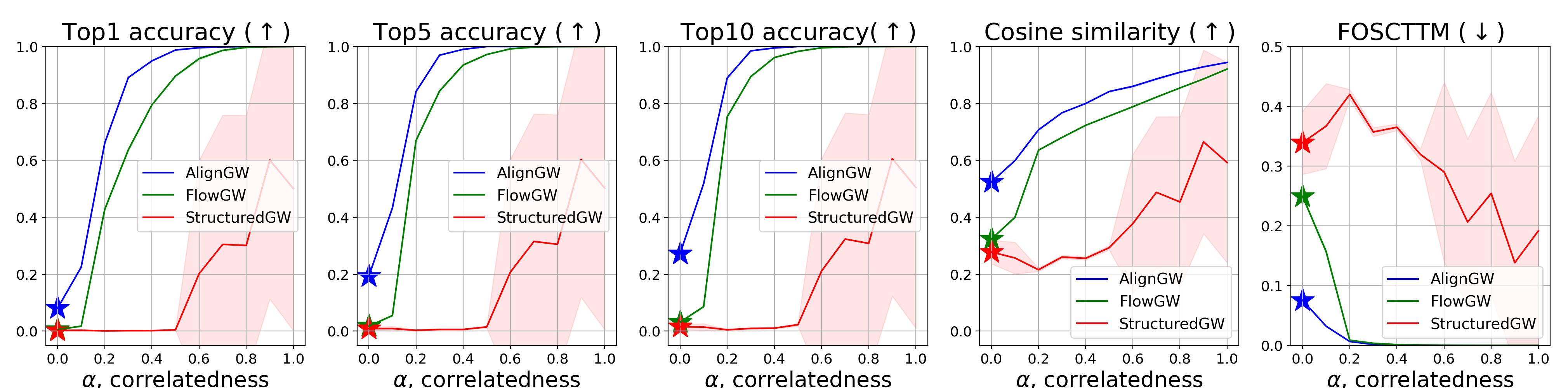} 
    \caption{\small MUSE-BPEmb (English) dataset ($\text{Source:}100\rightarrow \text{Target:}50$)}
    \label{fig:discrete-MUSE-BP-100-50}
    \end{subfigure}

    \vspace{-2mm}
    \caption{\small Performance of the baseline GWOT solvers for \textbf{Twitter-GloVe} (\ref{fig:discrete-twitter-glove-100-50}) and \textbf{MUSE-BPEmb (English)} (\ref{fig:discrete-MUSE-BP-100-50}) at different correlatedness levels $\alpha$ for the $100 \rightarrow 50$ setup. Solvers were trained with $N_{\text{train}}/2 = 3000$ samples from spaces of 400K and 90K, respectively; the plot shows results on a 2048-sample test subset. Accuracy metrics were computed over the combined \emph{reference} space of $N_{\text{train}} + N_{\text{test}} = 8048$ samples.}
    \label{fig:discrete_main_text}
    
\end{figure*}

\vspace{-1mm}
For evaluation, we measured matching and marginal metrics; the main text reports only the matching metrics, while the structural ones for the current setup are presented in Appendix~\ref{ap:marginal_metrics}. As part of the matching metrics, we report Top-$k$ accuracy $\uparrow$, cosine similarity $\uparrow$, and FOSCTTM $\downarrow$; a justification for the choice of \ul{matching and marginal metrics is provided in Appendix} \ref{ap:metrics_theory}. All metrics are computed on unseen test data, with \textit{reference} points defined as the union of train and test sets, $N = N_{\text{train}} + N_{\text{test}} = 8048$.

\vspace{-1mm}
\textbf{Conclusions.} The results indicate that almost all the baseline solvers perform well in totally correlated scenarios, even when evaluated on unseen data. StructuredGW presents a less robust behavior. This demonstrates their ability to learn and capture the inner structures when the data is highly correlated. However, their performance drops significantly as the value of $\alpha$ decreases. \ul{This finding clearly highlights a key limitation:} current GWOT methods are restricted to work on very limited setups where source and target datasets are highly correlated what is not practically common. Additionally, if high correlation exists, the GW alignment problem may be treated as a matching problem%

\vspace{-1mm}
\textbf{Towards solving pitfalls of existing GWOT solvers.} We conjecture that the observed behavior is mainly due to the small sizes of training sets dictated by the discrete nature of the solvers. Indeed, relatively small samples hardly could fully express the intrinsic geometry of the data, which complicates the faithful GW alignment of the domains, see Appendix \ref{app-discrete-gw-issue} for the broader discussion on the issue.
Therefore, one possible way to increase the performance is to consider GW solvers adapted to large amount of data. In the following section (\S \ref{sec:gwlargescale}), we check different possibilities. In particular, we propose a new continuous solver (\S \ref{subsec:neuralgw}) which does not rely on discrete techniques. Therefore, it can better capture the inner geometry and structure of the data without the strict need of training on correlated data as well as allowing training on large datasets.

\vspace{-1mm}
\textbf{Disclaimer.} With the provided benchmark pipeline we do not aim to demonstrate that some GW solvers do not work. Instead, we assume that all considered GW solvers work properly if applied correctly. In particular, even under an uncorrelated setup, see Figure 2, solvers may indeed accurately recover a GW coupling.

\vspace{-4mm}
\section{GWOT solvers at large scale}
\label{sec:gwlargescale}
\vspace{-3mm}

In this section, we start by introducing NeuralGW, a novel scalable method to solve the continuous GWOT problem (\wasyparagraph \ref{subsec:neuralgw}). Then we proceed to the practical performance of NeuralGW and the baselines in large-scale GloVe benchmark (\wasyparagraph \ref{sec:exp_neuralgw}).

\vspace{-4mm}
\subsection{Neural Gromov-Wasserstein Solver}
\label{subsec:neuralgw}

\vspace{-3mm}
In this subsection, we conduct the theoretical and algorithmic derivation of our proposed approach. In what follows, we restrict to the case $d_x \geq d_y$; source ($\bbP$) and target ($\bbQ$) distributions are absolutely continuous and supported on some compact subsets $\mathcal{X} \subset \Rdx$, $\mathcal{Y} \subset \Rdy$ respectively.
Our method is developed for innerGW, i.e., problem \eqref{eq:gwot-problem} with $c_\X\!=\!\langle\cdot,\cdot\rangle_{d_x}$,  $c_\Y\!=\!\langle\cdot,\cdot\rangle_{d_y}$, $p=2$:
\vspace{-1mm}
\begin{equation}
    \hspace*{2mm}\op{innerGW}^2_2(\Pd, \Qd) \eqdef  \min_{T_{\sharp}\Pd=\Qd}\!\!\!\!\!\int\limits_{\Rdx \!\times\! \Rdx}\!\!\!\!\!\!\big|\<\x, \x'\>_{d_x} \!-\! \<T(\x), T(\x')\>_{d_y}\big|^2 \!\!d\Pd(\x)d\Pd(\x').\label{eq:innerGW}
\end{equation}

\vspace{-3mm}
Note that the existence of minimizer for \eqref{eq:innerGW} is due to \citep[Theorem 3.2]{dumont2024existence}. We base our method on the theoretical insights about GW from \citep{vayer2020contribution}. According to \citep[Theorem 4.2.1]{vayer2020contribution}, when $\int \|\x\|_2^4d\Pd(\x) < +\infty$, $\int \|\y\|_2^4d\Qd(\y) < +\infty$, problem \eqref{eq:innerGW} is equivalent to
\vspace{-1mm}
\begin{equation}
  \op{innerGW}^2_2(\Pd, \Qd) = \op{Const}(\Pd, \Qd) -  \\ \!\!\!\max_{\pi \in \Pi(\Pd, \Qd)}\max_{P \in F_{d_x, d_y}} \int \<\textbf{P}\x, \y\>_{d_y}d\pi(\x, \y),
  \label{eq:maxot}
\end{equation}
\vspace{-2mm}\newline
where $F_{d_x, d_y} \!\!\eqdef\! \{\textbf{P} \in \R^{d_x \times d_y}\:|\:\|\textbf{P}\|_{\mathcal{F}} = \min(\sqrt{d_x}, \sqrt{d_y})\}$ are the matrices of fixed Frobenius norm. Note that \eqref{eq:maxot} admits a solution $\pi^* \in \Pi(\Pd, \Qd)$, $P^* \in F_{d_x, d_y}$ \citep[Lemmas 6.2.7; 4.2.2]{vayer2020contribution}.

\vspace{-1mm}
Our following lemma reformulates the innerGW problem as a minimax optimization problem. This  reformulation is inspired by well-celebrated dual OT solvers such as \citep{korotin2021neural,fan2023neural,korotin2023neural}.

\vspace{-2mm}
\begin{lemma}[InnerGW as a minimax optimization] It holds that \eqref{eq:maxot} is equivalent to

\vspace{-3mm}
\begin{equation}
\op{innerGW}_2^2(\mathbb{P}, \mathbb{Q})=\op{Const}{(\mathbb{P},\mathbb{Q})}+ \\ \min_{P\in F_{d_x,d_y}}\,\max_{f:\Rdy\rightarrow\mathbb{R}} \,\,\min_{T \!\colon\! \Rdx \to \Rdy} \mathcal{L}(\textbf{P}, f, T);
\label{eq:neuralgw}
\end{equation} 
\begin{equation*}
\mathcal{L}(\textbf{P}, f, T) \eqdef \\ \int_{\Rdy}f(\textbf{y})d\mathbb{Q}(\textbf{y})-\int_{\Rdx} [\langle \textbf{P}\textbf{x},T(\textbf{x})\rangle_{d_y} +f(T(\textbf{x}))]d\mathbb{P}(\textbf{x}).
\end{equation*} 
\label{lem:innergw}
\end{lemma}
\vspace{-5mm}
The following theorem theoretically validates the minimax optimization framework for solving the GW problem. It shows that under certain conditions, the solution $T^*$ of the minimax problem \eqref{eq:neuralgw} brings an optimal GW mapping.
\vspace{-2mm}
\begin{theorem}[Optimal maps solve the minimax problem]\label{thm:GW}
Assume that there exists at least one GW map $T^*$. For any matrix $\textbf{P}^*$ and any potential $f^*$ that solve \eqref{eq:neuralgw}, i.e., 
\vspace{-1mm}
$$
\textbf{P}^* \in \argmin\limits_{P\in F_{d_x,d_y}}\max\limits_f \min\limits_{T \colon \Rdx \to \Rdy} \mathcal{L}(P, f, T),\hspace{5mm} 
$$
\vspace{-5mm}
$$f^* \in \argmax\limits_f \min\limits_{T \colon \Rdx \to \Rdy} \mathcal{L}(\textbf{P}^*, f, T),$$
and for any GW map $T^*$, we have:

\vspace{-5mm}

\begin{equation}
    T^* \in \argmin_{T \colon \Rdx \to \Rdy} \mathcal{L}(\textbf{P}^*, f^*, T).
\end{equation}
\end{theorem}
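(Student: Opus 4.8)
The plan is to show that substituting the GW map $T^*$ into the inner objective at the optimal pair $(P^*,f^*)$ attains the saddle value, so that $T^*$ is automatically an inner minimizer. By the Lemma we have $\min_T \mathcal{L}(P^*,f^*,T) = \op{innerGW}^2(\mu,\nu) - \op{Const}(\mu,\nu) =: V$, and since $\mathcal{L}(P^*,f^*,T^*) \geq \min_T \mathcal{L}(P^*,f^*,T) = V$ is trivial, it suffices to prove the reverse inequality $\mathcal{L}(P^*,f^*,T^*) \leq V$.

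First I would simplify $\mathcal{L}(P^*,f^*,T^*)$. Because $T^*$ is a transport map, $T^*_\sharp\mu=\nu$, so $\int_{\R^n} f^*(y)\,d\nu(y) = \int_{\R^m} f^*\big(T^*(x)\big)\,d\mu(x)$ and the two potential terms in $\mathcal{L}$ cancel, leaving $\mathcal{L}(P^*,f^*,T^*) = -\int_{\R^m} \langle P^*x, T^*(x)\rangle_n\,d\mu(x)$, which is independent of $f^*$. The target inequality thus becomes $\int \langle P^*x, T^*(x)\rangle_n\,d\mu(x) \geq -V = \max_{\pi,P}\int\langle Px,y\rangle_n\,d\pi$, the quantity appearing in \eqref{eqn:MaxOT}. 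Since the reverse bound $\int\langle P^*x,T^*(x)\rangle_n\,d\mu \leq \max_{\pi,P}\int\langle Px,y\rangle_n\,d\pi$ is immediate (the left side is one admissible value of the right-hand maximization, taking $P=P^*$ and $\pi=(\op{id}\times T^*)_\sharp\mu$), the whole statement collapses to the single equality $\int\langle P^*x,T^*(x)\rangle_n\,d\mu(x) = \max_{\pi,P}\int\langle Px,y\rangle_n\,d\pi$.

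The key step is therefore to certify that the GW map $T^*$, paired with the optimal matrix $P^*$, \emph{jointly} solves the maximization in \eqref{eqn:MaxOT} — equivalently, that $T^*$ is an optimal transport map for the inter-domain cost $c_{P^*}(x,y) = -\langle P^*x,y\rangle_n$. I would assemble this from two ingredients: (i) $P^*$ being an outer minimizer of \eqref{eqn:lemma} means, after unwinding the Lemma's chain of equalities, that $P^*$ attains $\max_P\big[\max_\pi\int\langle Px,y\rangle_n\,d\pi\big]$, so $P^*$ is an optimal matrix; and (ii) $T^*$ being a GW map means, via the equivalence \eqref{eqn:MaxOT}, that its induced coupling is GW-optimal. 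Once the equality above is in hand, the claim closes by invoking the neural dual OT result \citep{korotin2021neural,rout2022generative} for the \emph{fixed} cost $c_{P^*}$: with $f^*$ an optimal potential and $T^*$ a corresponding OT map for $c_{P^*}$, the map $T^*$ minimizes $\int\big[c_{P^*}(x,T(x)) - f^*(T(x))\big]d\mu + \int f^*\,d\nu = \mathcal{L}(P^*,f^*,T)$, which is exactly the desired conclusion.

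The main obstacle is precisely the equality in ingredient (i)–(ii), i.e. the \emph{compatibility} of the optimal matrix $P^*$ with the chosen GW map $T^*$. The delicate point is that GW solutions are determined only up to isometries of the source and target spaces, so a priori distinct GW maps carry cross-correlation matrices $M_T = \int x\,T(x)^\top\,d\mu$ pointing in different directions, and one must argue that $P^*$ aligns with $M_{T^*}$ and not with the matrix of some other GW solution. I expect to handle this by showing that the inner maximization defining $P^*$ together with the GW-optimality of $T^*$ force the two to form a common saddle point of \eqref{eqn:lemma}. This alignment, rather than the routine cancellation of the potential terms in the earlier steps, is where the real content of the theorem lies.
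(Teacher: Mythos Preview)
Your outline tracks the paper's proof almost line for line: cancel the potential terms using $T^*_\sharp\mu=\nu$, reduce the claim to the single equality
\[
\int_{\R^m}\<P^*x,T^*(x)\>_n\,d\mu(x)\;=\;\max_{\pi\in\Pi(\mu,\nu)}\int_{\R^m\times\R^n}\<P^*x,y\>_n\,d\pi(x,y),
\]
and finish. The paper dispatches this equality in one sentence, writing that ``$\pi^*=[\op{id}_{\R^m},T^*]_\sharp\mu$ is optimal'' and citing \eqref{eqn:lemma_proof}. You correctly isolate this as the only nontrivial step and correctly worry about it: knowing that $P^*$ maximizes $P\mapsto\max_\pi\int\<Px,y\>\,d\pi$ and that $\pi^*$ maximizes $\pi\mapsto\max_P\int\<Px,y\>\,d\pi$ does \emph{not} imply that the particular pair $(P^*,\pi^*)$ attains the joint maximum, so your hoped-for ``forcing'' argument cannot succeed in the stated generality.

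Here is a concrete obstruction. Let $m=n=2$ and $\mu=\nu=\mathcal{N}(0,I_2)$. Every orthogonal map is a GW map (innerGW value $0$), and every orthogonal matrix lies in $F_{2,2}$ and maximizes $G(P)=\max_\pi\int\<Px,y\>\,d\pi$ (which equals the nuclear norm of $P$, maximal on the Frobenius sphere precisely when the singular values are equal). Take $P^*=I_2$; then $\min_T\mathcal{L}(P^*,f^*,T)=-\max_\pi\int\<x,y\>\,d\pi=-\op{tr}(I_2)=-2$, attained by $T=\op{id}$. But the GW map $T^*(x)=(x_1,-x_2)$ gives $\mathcal{L}(P^*,f^*,T^*)=-\int\<x,T^*(x)\>\,d\mu=-\mathbb{E}[x_1^2-x_2^2]=0\neq-2$, so this $T^*$ is not an inner minimizer. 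Thus the compatibility you flagged genuinely fails for the quantifier ``for any GW map $T^*$'', and the paper's one-line justification does not supply an argument you are overlooking; at best the statement holds for \emph{some} GW map adapted to the chosen $P^*$, not for all of them.
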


\vspace{-3mm}
To optimize \ref{eq:neuralgw} we follow the best practices from the field of continuous OT \citep{korotin2021continuous,fan2023neural,korotin2023kernel,korotin2023neural,asadulaev2024neural,choi2023generative,gushchin2024entropic} and simply parameterize $f$ and $T$ with neural networks. In turn, $\textbf{P}$ is a learnable matrix of fixed Frobenius norm. We use the alternating stochastic gradient ascent/descent/ascent method to train their parameters. The learning algorithm is detailed in the Appendix \ref{ap:code}. We call the method NeuralGW. As the sanity check, we run our proposed approach on toy 3D$\rightarrow$2D setup, see Figure \ref{fig:toyneuralv2} in Appendix \ref{ap:toy3Dto2D}. Note that in comparison to other continuous GWOT approaches, our method \textit{does not rely on discrete OT} in any form. In particular, the training process assumes access to just random samples from $\mathbb{P},\mathbb{Q}$; it does not use/need any pairing between them.

\vspace{-3mm}
\subsection{Practical performance of NeuralGW and baselines at large scale}
\label{sec:exp_neuralgw}

\vspace{-2mm}
We start by introducing our \textbf{large-scale} Twitter-GloVe and MUSE-BPEmb setups. We consider the same data preparation process as in \wasyparagraph \ref{subsec:benchmark}, but we respectively take $N_{train}=380$K and $N_{train}=90$K samples instead of the $6$K used in \wasyparagraph \ref{subsec:benchmark} for the baseline solvers; $N_{test}=2048$ is left the same.
Therefore, each repetition consists in training the models with the same data ($190$K or $45$K samples) but using different initialization parameters for, e.g., the neural networks. As the \textit{reference} dataset for metrics computation, we used the \textit{whole} datasets of Twitter-GloVe and MUSE-BPEmb embeddings ($400$K and $90$K respectively).

\vspace{-2mm}

\begin{figure*}[!ht]
    \begin{subfigure}{1\textwidth}
    \centering
    \includegraphics[scale=0.27]{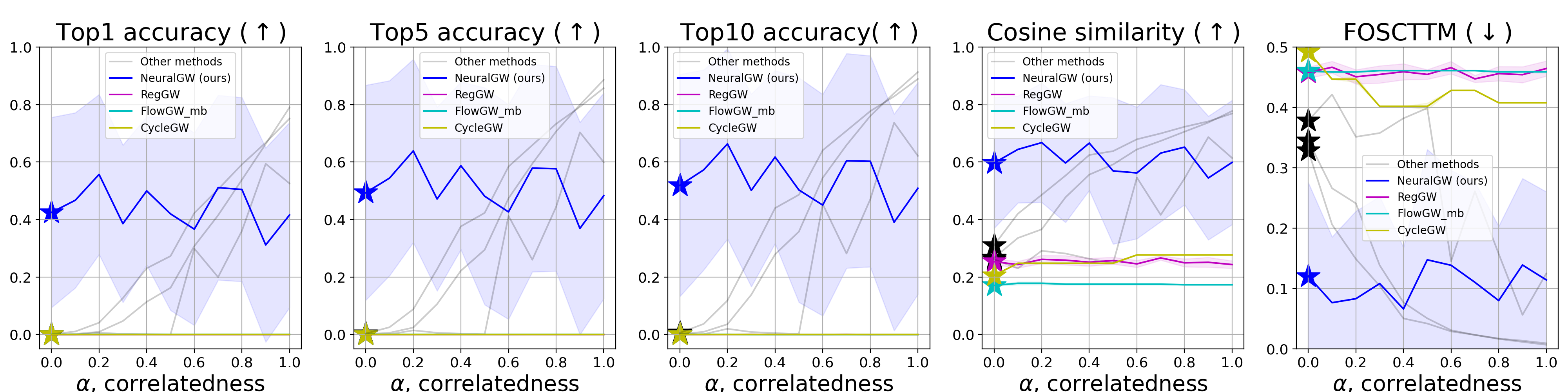} 
    \vspace{-1mm}\caption{\small Twitter-GloVe dataset ($\text{Source:}100\rightarrow \text{Target:}50$)}
    \hspace{2mm}
    \label{fig:continuous_twitter_glove_100_50}
    
    \end{subfigure}

    \vspace{-2.0mm}\begin{subfigure}{1\textwidth}
    \centering
    \includegraphics[scale=0.27]{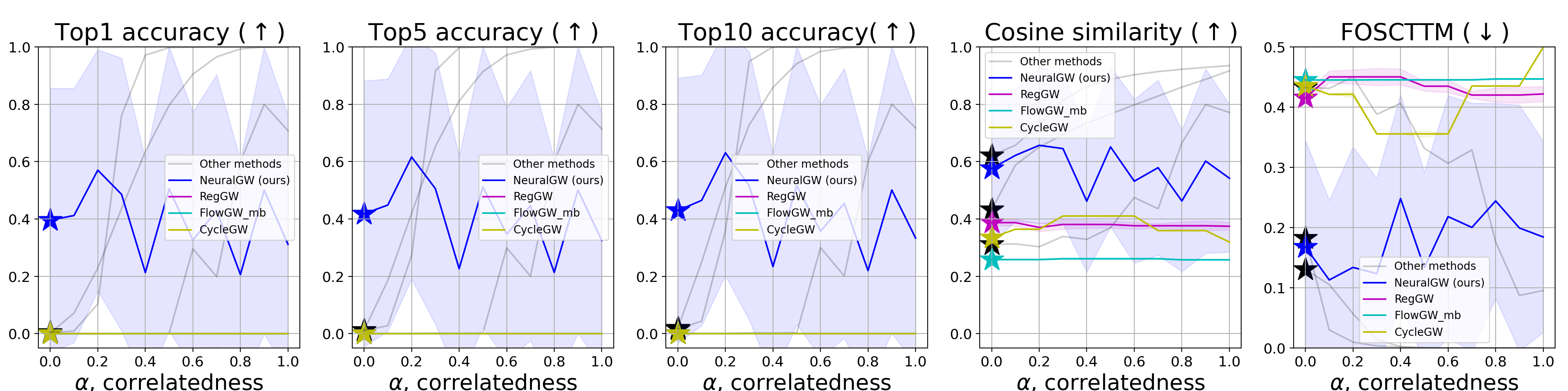} 
    \vspace{-1mm}\caption{\small MUSE-BPEmb (English) dataset ($\text{Source:}100\rightarrow \text{Target:}50$)}
    \label{fig:continuous_MUSE_BP_100_50}
    
    \hspace{2mm}
    \end{subfigure}
    \vspace{-7mm}

    \caption{\small Performance of the baseline GWOT solvers for the \textbf{Twitter-GloVe} (\ref{fig:continuous_twitter_glove_100_50}) and \textbf{MUSE-BPEmb (English)} embeddings (\ref{fig:continuous_MUSE_BP_100_50}) at different levels of correlatedness $\alpha$ for the $100\rightarrow 50$ dimensionality setup. The solvers were trained with $N_{train}/2=380\text{K}$ and $N_{train}/2=88\text{K}$ samples, respectively. This plot shows results for a testing subset of $2048$ samples, the accuracy metrics were computed considering \textit{reference spaces} of $400$K and $90$K samples, respectively.}
    \label{fig:metrics_neuralgw}

\end{figure*}
\vspace{1mm}
The competitive methods for the comparison under \textbf{large-scale} setups are: NeuralGW (\S \ref{subsec:neuralgw}); RegGW (\S \ref{sec:related_work}); CycleGW (\S \ref{sec:related_work}) and FlowGW (\S \ref{sec:related_work}). The latter is trained in a minibatch manner, i.e., it fits flow matching on top of discrete GW solutions for minibatches. For completeness, we additionally report the performance of the baselines from \S \ref{subsec:benchmark} (in gray, labelled as ``Other methods'', Figure \ref{fig:metrics_neuralgw}). Note that, in this plot, they are trained on a small subset ($N_{train} = 6$K), but the metrics are computed with respect to the \textit{whole reference} spaces, similar to NeuralGW, RegGW, CycleGW and FlowGW. 

\vspace{-1mm}
\ul{The colored charts for baselines' metrics} computed in the whole Twitter-GloVe and MUSE-BPEmb \textit{reference}) are available in Figure \ref{fig:small_train-large_test}, Appendix \ref{ap:experiments-Glove}. While the comparison of the methods trained on $3$K samples (baselines in gray) and $190$K or $90$K samples (our approach) might seem unfair, we stress that the sizes of datasets are selected based on the computational capabilities of the solvers.

\vspace{-2mm}
Based on the obtained results in Figure \ref{fig:metrics_neuralgw}, we may notice the following:

\vspace{-2mm}
\begin{enumerate}[(a), leftmargin=*]
    \item NeuralGW is the only method which may deal with large datasets for all correlatedness levels. However, its performance is unstable across iterations.
    
    \vspace{-1mm}\item The advanced baselines (RegGW, CycleGW and FlowGW) failed to achieve reasonable performance for any value of $\alpha \le 1$. This happens because, although the source and target datasets are globally correlated, the random permutation applied to the target data (see \S\ref{subsec:correlated_setups}) leads to batches with little or no correlation. Consequently, these models are trained on batches with $\alpha_{\text{batch}} \approx 0$, and due to their inherent reliance on discrete GW techniques, they are unable to learn a meaningful transport function.
    \vspace{-1mm}\item The choice of embedding used to generate the dataset directly impacts the performance of GW solvers and can lead to improved transport even at lower levels of correlatedness.
\end{enumerate}

\vspace{-2mm}
For the sake of completeness, we provide two additional experiments trained on Twitter-GloVe and MUSE-BPEmb (English): 1) \ul{NeuralGW trained on $3K$ samples} (see Appendix \ref{ap:experiments-Glove}). The results are bad, which is expected because NeuralGW is based on complex adversarial procedure while the dataset is small. 2) \ul{RegGW, CycleGW and FlowGW (minibatch) trained on correlated batches} (see Appendix \ref{ap:paired_batches}) where each batch from the source is explicitly aligned with a correlated batch from the target. This produces an improvement in performance; however, this setup is unfair due to this artificial construction, and the corresponding results should be interpreted with caution. \ul{We also provide marginal metrics} for the setups in Figures \ref{fig:discrete_main_text} and \ref{fig:metrics_neuralgw}, see Appendix \ref{ap:marginal_metrics}.

\textbf{Conclusions.} Our proposed method (NeuralGW) is one of the first solver for the GWOT problem that does not rely on discrete approximations and hence can handle more challenging and realistic setups with uncorrelated data. Specifically, we attract the readers' attention to metrics' values at $\alpha=0$ which are highlighted with the star $\star$ symbol. In all the cases (Figure \ref{fig:metrics_neuralgw}), our method outscores competitors by a large barrier. Our NeuralGW supports gradient ascent-descent batch training on large datasets. This capability enables the solver to learn intricate substructures even when trained on uncorrelated batches/data. The initial results for our method suggest the potential to develop a general GWOT solver that is independent of data correlation, a significant advantage given that real-world datasets often lack such correlation. 

\vspace{-1mm}
In spite of achieving the best performance on uncorrelated data, the results are inconsistent with respect to the initialization parameters, see a high standard deviation among repetitions. This inconsistency may be due to the minimax nature of the optimization problem. Additionally, adversarial methods like our NeuralGW are known to require large amounts of data for training, which can lead to issues when working with small datasets. We explore more general problems for baseline and NeuralGW solvers in Appendix \ref{ap:experiments}.

\vspace{-3mm}
\section{Discussion}
\vspace{-3mm}
The general scope of our paper is conducting in-depth analyses of machine learning challenges that yield important new insights. In particular, we analyze Gromov-Wasserstein Optimal Transport solvers, identify the problems and propose some solutions. Our work clearly shows that while existing Gromov-Wasserstein Optimal Transport methods exhibit considerable success when solving downstream tasks, their performance may severely depend on the intrinsic properties of the training data. We partially address the issue by introducing our novel NeuralGW method. However, it has its own disadvantages. In particular, it is based on adversarial training which may be unstable and is not guaranteed to converge to an optimal solution of the GW problem. Thereby, our work witnesses that GWOT challenge in ML still awaits its hero who will manage to propose a reliable general-purpose method for tackling the problem.

\vspace{-1mm}
\textbf{Reproducibility.} We provide the experimental details in Appendix \ref{ap:code} and the code to reproduce the conducted experiments in the supplementary material (see \texttt{readme.md}).

\vspace{-3mm}
\paragraph{LLM Usage.} Large Language Models (LLMs) were used only to assist with rephrasing sentences and improving the clarity of the text. All scientific content, results, and interpretations in this paper were developed solely by the authors.
\newpage
\bibliography{iclr2026_conference}

\newpage
\appendix
\addtocontents{toc}{\protect\setcounter{tocdepth}{3}}
\tableofcontents
\newpage

\section{Proofs of theorems and lemmas}
\label{ap:proofs}

\textbf{Proof of Lemma \ref{lem:innergw}.} First, we recall the dual  formulation of \eqref{eq:monge-ot}, see, e.g., \citep{fan2023neural}:
\begin{equation}\label{eq:dualOT}
  \text{OT}(\Pd,\Qd) \eqdef \max_f \left[ \min_T \int \big(c(\x,T(\x)) -f(T(\x))\big)d\Pd(\x) + \int f(\y)d\Qd(\y)\right],
\end{equation}

respectively. Note that the existence of a solution $(f^*, T^*)$ of \ref{eq:dualOT} is due to \citep[Theorem 2]{fan2023neural}. Our proof starts with \eqref{eq:maxot}. For each $\textbf{P}$, we rewrite the inner optimization over $\textbf{P}$ using \eqref{eq:dualOT} for the cost $c(\x,\y)=-\<\textbf{P}\x,\y\>$:

\begin{eqnarray}\label{eqn:lemma_proof}
    \nonumber \op{Const}{(\mathbb{P},\mathbb{Q})}-\text{innerGW}^2_2(\Pd,\Qd) = \max_{\textbf{P} \in F_{d_x, d_y}}\max_{\pi \in \Pi(\Pd, \Qd)} \int\limits_{\Rdx \times \Rdy}  \<\textbf{P}\x, \y\>_n d\pi(\x, \y) = \\ \nonumber
     - \min_{\textbf{P} \in F_{m, n}} \left[\min_{\pi \in \Pi(\Pd,\Qd)}  \int\limits_{\Rdx \times \Rdy} -\<\textbf{P}\x, \y\>_{d_y} d\pi(\x, \y)\right] = \label{eqn:lemma_proof1}\\
    \nonumber - \min_{\textbf{P} \in F_{d_x, d_y}} \left[ \max_f \int\limits_{\Rdx} f(\y) d\Qd(\y) + \min_{T \colon \Rdx \to \Rdy} \int\limits_{\Rdx} -\<\textbf{P}\x, T(\x)\>_{d_y}- f(T(\x))d\Pd(\x) \right] = \\
    \nonumber
    - \min_{\textbf{P} \in F_{d_x, d_y}}\max_f \min_{T} \mathcal{L}(\textbf{P}, f, T).
\end{eqnarray}

\textbf{Proof of Theorem \ref{thm:GW}}. We expand $\mathcal{L}(\textbf{P}^*, f^*, T^*)$ and use the fact that $T^*$ is the OT map.

\begin{equation}\label{eqn:line1}
    \mathcal{L}(\textbf{P}^*, f^*, T^*) = \int\limits_{\Rdy} f^*(\y) d\Qd(\y) - \int\limits_{\Rdx} \<\textbf{P}^* \x,T^* (\x)\>_{d_y} + f^* \big(T^* (\x)\big)d\Pd(\x).
\end{equation}

Since $T^*$ is an OT map, we have $T^*_\sharp\Pd = \Qd$, and by the change of variables formula  we get:

\begin{equation*}
    \int\limits_{\Rdx} f^*\big(T^*(\x)\big)d\Pd(\x) = \int\limits_{\Rdy} f^*(\y) d\Qd(\y).
\end{equation*}

Plugging this into \eqref{eqn:line1}, we get:

\begin{equation*}
    \mathcal{L}(\textbf{P}^*, f^*, T^*) = -\int\limits_{\Rdx} \<\textbf{P}^* \x,T^* (\x)\>_{d_y} d\Pd(\x).
\end{equation*}

Here, we once again use the fact that $T^*$ is the optimal transport map. Now, since $P^*$ and $f^*$ solve \eqref{eq:innerGW}, we get the following:

\begin{equation*}
\op{innerGW}^2_2(\Pd, \Qd) = \op{Const}(\Pd, \Qd) + \min\limits_{ T_\sharp\Pd = \Qd} \mathcal{L}(\textbf{P}^*, f^*, T)
\end{equation*}

Finally, from the fact that $\pi^* = [\text{id}_{\Rdx}, T^*]_\sharp\Pd$ is optimal and \eqref{eqn:lemma_proof}, we have:

\begin{eqnarray}
    \nonumber -\mathcal{L}(\textbf{P}^*, f^*, T^*) = \int\limits_{\Rdx \times \Rdy} \<\textbf{P}^* \x,\y\>_{d_y} d\pi^* (\x, \y) = \max_{\pi \in \Pi(\Pd, \Qd)} \int\limits_{\Rdx \times \Rdy} \<\textbf{P}^*\x, \y\>_{d_y} d\pi(\x, \y) = \\ \nonumber
    -\min_{T_\sharp\Pd = \Qd}\mathcal{L}(\textbf{P}^*, f^*, T^*),
\end{eqnarray}

which completes the proof. 
    
\section{Toy sanity-check experiment}\label{ap:toy3Dto2D}
\vspace{-2mm}
To illustrate the capabilities of the solvers and as a necessary sanity check for the implementations, we propose a toy experiment. In this setup, the source distribution is a mixture of Gaussians in $\mathbb{R}^3$ and the target is also a mixture of Gaussians but in $\mathbb{R}^2$, see Figure \ref{fig:toysrctgt}. By choosing this experiment on incomparable spaces, we ensure the solvers are actually capable of dealing with a real Gromov-Wasserstein problem. The obtained results for the baselines can be found in Figure \ref{fig:toydiscrete} and \ref{fig:regGW}, Figure \ref{fig:toyneuralv2} shows the result for our method, NeuralGW. As we can see, for all methods a component of the source distribution is mostly mapped to a component/neighbouring components of the target distribution, indicating the correct GW alignment. We also report the distortion $d$ obtained after the methods were trained, the ground truth value for Figure \ref{fig:toysrctgt} was computed using the discrete GW solver from the POT library.

\begin{figure*}[h]
    \centering
    \minipage{0.205\textwidth}
        \begin{subfigure}{1\textwidth}
      \includegraphics[width=\linewidth]{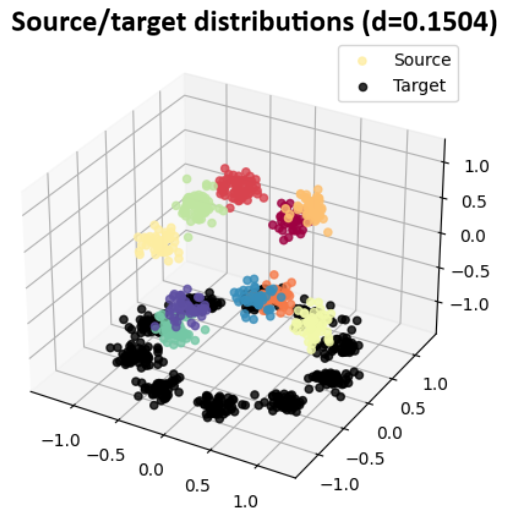}
      \vspace*{-5mm}
      \caption{\fontsize{8}{12}\selectfont Source, target distributions.}\label{fig:toysrctgt}
      \end{subfigure}
    \endminipage%
    \hspace*{2mm}\minipage{0.192\textwidth}
        \begin{subfigure}{1\textwidth}
    
      \includegraphics[width=\linewidth]{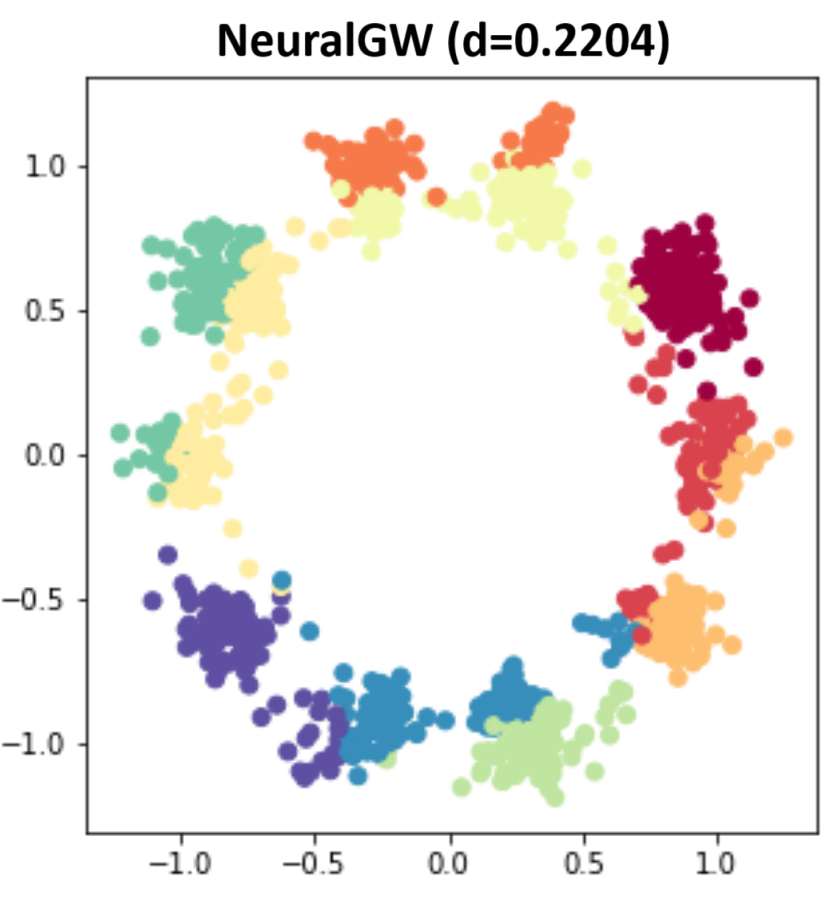}
      \caption{\fontsize{8}{12}\selectfont NeuralGW solver (ours)}\label{fig:toyneuralv2}
      \end{subfigure}
    \endminipage%
    \hspace*{2mm}\minipage{0.46\textwidth}%
    \centering
        \begin{subfigure}{1\textwidth}
    
      \includegraphics[scale=0.35]{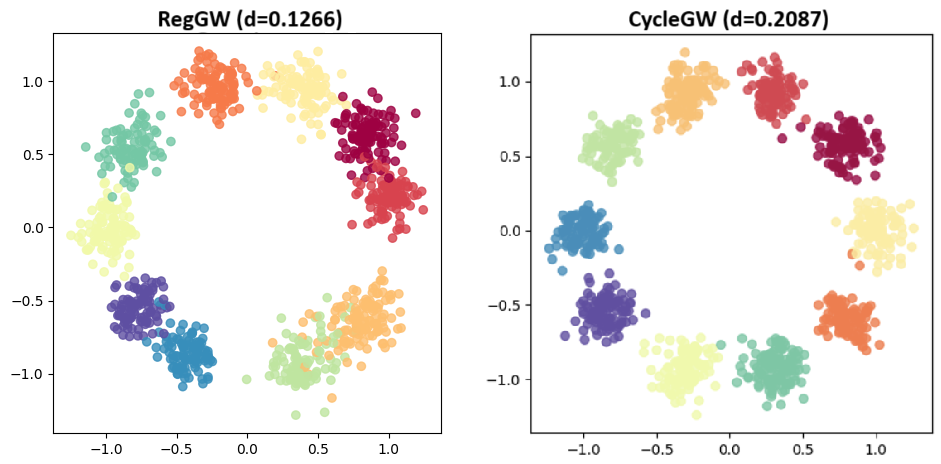}
      \caption{\fontsize{8}{12}\selectfont Batched solvers (RegGW and CycleGW)}\label{fig:regGW}
      \end{subfigure}
      
    \endminipage
    \vspace*{2mm}
    \begin{subfigure}{1\textwidth}
    \centering
    \hspace*{-3mm}\includegraphics[scale=0.3]{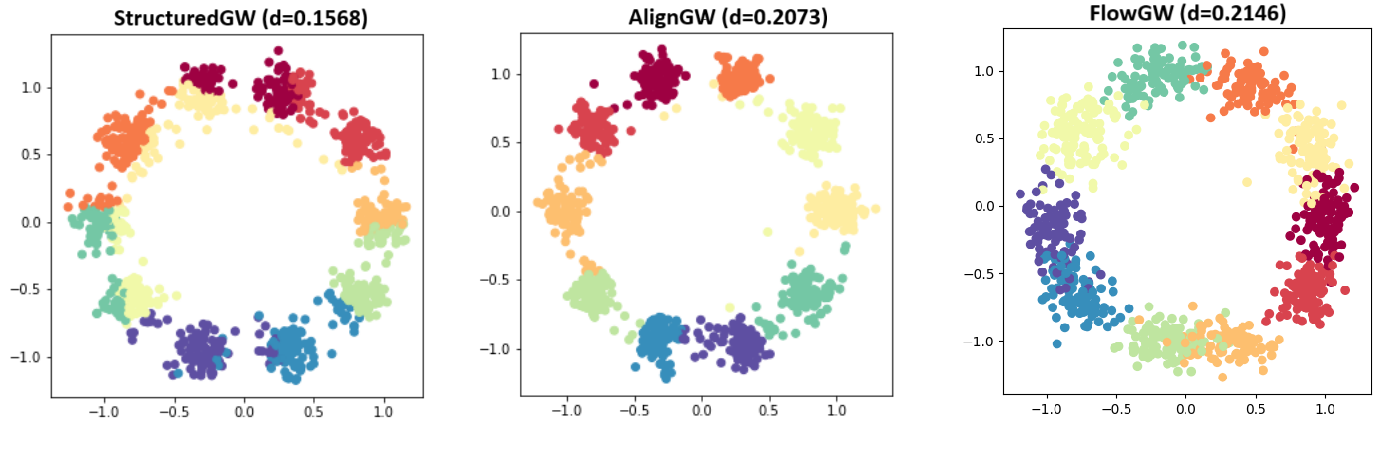}
    \vspace{-2mm}\caption{ \fontsize{8}{12}\selectfont Baseline solvers (StructuredGW, AlignGW, FlowGW).}
    \label{fig:toydiscrete}
    \end{subfigure}
    
    \vspace{-2mm}\caption{Learned GWOT map $T$ by different solvers; Toy (3D$\rightarrow$2D) experiment.}
    \label{fig:toy}
    \vspace{-4mm}
\end{figure*}
\vspace{-2mm}

\section{Metrics and embeddings}\label{ap:metrics_embeddings}
\vspace{-2mm}
\subsection{Metrics}\label{ap:metrics_theory}

\vspace{-2mm}
This subsection explains how our evaluation metrics are computed. In the main text, we report what we refer to as \textit{matching metrics}. These include Top-$k$ accuracy, cosine similarity, and the Fraction of Samples Closer Than the True Match (FOSCTTM). Additionally, in Appendix \ref{ap:experiments}, we report the \textit{marginal metrics}, Maximum Mean Discrepancy (MMD) \citep{JMLR:v13:gretton12a}, Sinkhorn divergence \citep{feydy2018interpolatingoptimaltransportmmd}, Bures-Wasserstein unexplained variance percentage ($\text{B}\mathbb{W}_2^2\text{-UVP}$) \citep{korotin2019wasserstein} and distortion. All the metrics (with exception of FOSCTTM) are computed on normalized data, therefore, $\tilde{\x}=\x/\|\x\|_2$ where $\|\cdot\|_2$ denotes the Euclidean vector norm. Below we provide a concise explanation about how they are computed as well as the motivation to pick them.

\textbf{Matching metrics}

These metrics aim to provide an intuitive assessment of the quality of the predicted alignments by measuring how well the recovered correspondences match the ground truth. In order to compute them, it is necessary to know the actual ground-truth pairs.

\begin{itemize}[leftmargin=*]
\vspace{-2mm}
    \item \textbf{Top $k$-accuracy} ($\uparrow$). Considering the set of $N_{test}=m$ vectors from the source distribution and their predictions. For each predicted vector we compute the $k$-closest (in terms of $L_2$ distance) samples in the \textit{reference} pool of samples and get a sorted set of $k$ indices $\{c_1,c_2,\dots,c_k\}$. As we know the indices of the optimal pairs for the \textit{reference} space, we can take the label of the expected optimal pair for any vector in the test source data, this label will be denoted as $l_i$. Therefore, we can define the top $k$-accuracy as follows:
    
\vspace{-2mm}
    $$\text{Top }k\eqdef\frac{1}{m}\sum_{j=1}^m\mathbbm{1}\{l_j\in \{c_1,c_2,\dots,c_k\}\}$$
    \vspace{-2mm}

    The size of the \textit{reference} space may vary depending on the specific experiment, as discussed in the main text. This metric is particularly useful due to its intuitive interpretability. Additionally, evaluating performance across different values of $k$ in Top-$k$ accuracy offers insight into how well the solver reconstructs the target distribution while preserving its internal structure.

    \vspace{-1mm}
    \item \textbf{Fraction of Samples Closed Than the True Match (FOSCTTM).} ($\downarrow$) We calculate the Euclidean distances from a designated transported point ($\y=T(\x)$) to every data point from the \textit{reference} set in the target domain. Using these distances, we then compute the proportion of samples that are nearer to the true pair (this information is known). Finally, we take the average of these proportions for all samples. The perfect alignment would mean that every sample is closest to its true counterpart, producing an average FOSCTTM of zero. We note that this metric is rather insensitive to the \textbf{reference}, i.e., considering the whole/random subset of Twitter-GloVe dataset does not affect it much. This metric does not depend on the \textit{reference} space as it requires distances between paired samples.

    \vspace{-1mm}
    \item \textbf{Cosine similarity} ($\uparrow$). It is computed between the predicted vector and the \textit{reference} (optimal pair) vector in the target space. 
\end{itemize}

\textbf{Marginal metrics}

We additionally report several metrics to ensure that the learned transport not only aligns individual samples but also accurately matches the overall distributions. The metrics considered are the Maximum Mean Discrepancy (MMD) \citep{JMLR:v13:gretton12a}, the Sinkhorn divergence \citep{feydy2018interpolatingoptimaltransportmmd}, and the Bures-Wasserstein unexplained variance percentage \citep{korotin2019wasserstein}. The metrics reported are computed for the testing data only. %

\vspace{-3mm}
\subsection{Embeddings}\label{ap:embeddings_theory}
\vspace{-2mm}
As mentioned in the main part of the paper, we use two different embeddings, GloVe and BPEmb, here we provide a more detailed explanation about how the datasets were constructed.

\textbf{GloVe embeddings and datasets}

The GloVe embeddings of words are generated using the GloVe algorithm \citep{pennington2014glove}. Its main advantage is that the embedded vectors capture semantic relationships and exhibit linear substructures in the vector space. This allows meaningful computation of distances and alignments, which is central to the GWOT framework. The authors provide access to the GloVe embeddings of four datasets: Wikipedia, Gigaword, Common Crawl, and Twitter. We focus solely on the GloVe embeddings for the Twitter corpus. As we mentioned in the main text, we randomly take $400$K samples from a total of $1.2$ million, we use the pre-trained embeddings provided by the authors.

\textbf{Byte-Pair embeddings and datasets}

We explore the use of Byte-Pair embeddings (BPEmb), which is a subword tokenization method that breaks words into smaller units. It works by iteratively merging the most common pairs of adjacent symbols (like characters or character groups) in a corpus until a set vocabulary size is reached.

The MUSE embeddings were originally obtained using fastText  \citep{joulin2016fasttext}; however, most of the methods explored in this paper are unable to align these embeddings effectively. In the work we reference as the AlignGW solver \citep{alvarez2018gromov}, the authors report positive alignment results on this dataset. However, it is important to note that these results may not fully reflect the true alignment capability of the method, as they incorporate cross-domain similarity local scaling (CSLS) \citep{DBLP:journals/corr/abs-1710-04087}. While CSLS is commonly used in alignment tasks to enhance inference performance in multilingual translation, it introduces a corrective bias that may inflate the method’s apparent success. This reliance on CSLS may thus lead to results that do not accurately represent the method’s intrinsic alignment efficacy. In light of these considerations, we determined that a different approach was necessary for embedding the words from the MUSE vocabularies. %

We generated the new dataset using the bpemb library\footnote{https://github.com/bheinzerling/bpemb/tree/master}, which provides pre-trained subword embeddings. For a thorough explanation of how these embeddings are derived, we recommend reviewing the original paper. To increase the chances that a word from the MUSE dictionaries appear in the BPEmbd vocabularies, we selected the largest available vocabulary size (200K) when loading the pre-trained embeddings. However, if a word still doesn’t match, we chose to exclude them. Other reason to consider BPEmb is the possibility to compute the embeddings in different dimensions. For our experiments we only considered English and Spanish as bilingual dictionaries are provided for them. We excluded words with several translations and words with no translations. By doing this we ensure the obtained dataset of source and target embeddings fits our definition of correlatedness in Section \wasyparagraph\ref{subsec:correlated_setups}.

With these considerations in mind, we constructed source and target datasets of BP embeddings for MUSE (English and Spanish) and Twitter corpora. Although the number of samples was reduced, the datasets remain viable for continuous methods. 
\vspace{-3mm}

\section{Extended experiments}\label{ap:experiments}

\vspace{-3mm}

\textbf{Overview of the conducted experiments.} To help the reader navigating over all our considered experiments, We provide Table \ref{table:experiments_summary} summarizing the full list of experiments in our paper.

\begin{table}[!ht]
\begin{tabular}{c|c|c|c|c|c}
\hline
\textbf{\begin{tabular}[c]{@{}c@{}}Dataset \\ name\end{tabular}} & \textbf{\begin{tabular}[c]{@{}c@{}}Type \\ of embedding\end{tabular}} & \textbf{\begin{tabular}[c]{@{}c@{}}Size \\ of the dataset\end{tabular}} & \textbf{Train/test size}                                                                              & \textbf{\begin{tabular}[c]{@{}c@{}}Source \\ $\rightarrow$\\ Target\end{tabular}}                                  & \textbf{Evaluated on}                                                                                                                   \\ \hline
\multirow{8}{*}{\rule{0pt}{18ex}Twitter}                         & \multirow{8}{*}{\rule{0pt}{18ex}GloVe}                                & \multirow{8}{*}{\rule{0pt}{18ex}400K}                                   & \multirow{8}{*}{\begin{tabular}[c]{@{}c@{}}\rule{0pt}{18ex}Baseline solvers\\ 6000/2048\end{tabular}} & \multirow{2}{*}{\rule{0pt}{4ex}100$\rightarrow$50}                                                                 & \begin{tabular}[c]{@{}c@{}}8048 samples\\ Figure \ref{fig:discrete_main_text}\end{tabular}                                              \\ \cline{6-6} 
                                                                 &                                                                       &                                                                         &                                                                                                       &                                                                                                                    & \begin{tabular}[c]{@{}c@{}}\rule{0pt}{2ex}400K samples\\ Figure \ref{fig:discrete_twitter_glove_100_50_400K}\end{tabular}                                           \\ \cline{5-6} 
                                                                 &                                                                       &                                                                         &                                                                                                       & \multirow{2}{*}{\rule{0pt}{4ex}50$\rightarrow$25}                                                                  & \multirow{2}{*}{\begin{tabular}[c]{@{}c@{}}\rule{0pt}{2ex}400K samples\\ Figure \ref{fig:discrete_twitter_50_25_400K}\end{tabular}}     \\
                                                                 &                                                                       &                                                                         &                                                                                                       &                                                                                                                    &                                                                                                                                         \\ \cline{5-6} 
                                                                 &                                                                       &                                                                         &                                                                                                       & \multirow{2}{*}{\rule{0pt}{4ex}50$\rightarrow$100}                                                                 & \multirow{2}{*}{\begin{tabular}[c]{@{}c@{}}\rule{0pt}{2ex}400K samples\\ Figure \ref{fig:discrete_twitter_50_100_400K}\end{tabular}}    \\
                                                                 &                                                                       &                                                                         &                                                                                                       &                                                                                                                    &                                                                                                                                         \\ \cline{5-6} 
                                                                 &                                                                       &                                                                         &                                                                                                       & \multirow{2}{*}{\rule{0pt}{4ex}25$\rightarrow$50}                                                                  & \multirow{2}{*}{\begin{tabular}[c]{@{}c@{}}\rule{0pt}{2ex}400K samples\\ Figure \ref{fig:discrete_twitter_25_50_400K}\end{tabular}}     \\
                                                                 &                                                                       &                                                                         &                                                                                                       &                                                                                                                    &                                                                                                                                         \\ \hline
\multirow{7}{*}{Twitter}                                         & \multirow{7}{*}{GloVe}                                                & \multirow{7}{*}{400K}                                                   & \multirow{7}{*}{\begin{tabular}[c]{@{}c@{}}Continuous solvers\\ 360K/2048\end{tabular}}               & \multirow{2}{*}{100$\rightarrow$50}                                                                                & \multirow{2}{*}{\begin{tabular}[c]{@{}c@{}}\rule{0pt}{2ex}400K samples\\ Figure \ref{fig:continuous_twitter_glove_100_50}\end{tabular}} \\
                                                                 &                                                                       &                                                                         &                                                                                                       &                                                                                                                    &                                                                                                                                         \\ \cline{5-6} 
                                                                 &                                                                       &                                                                         &                                                                                                       & \multirow{2}{*}{50$\rightarrow$25}                                                                                 & \multirow{2}{*}{\begin{tabular}[c]{@{}c@{}}\rule{0pt}{2ex}400K samples\\ Figure \ref{fig:continuous_twitter_50_25}\end{tabular}}        \\
                                                                 &                                                                       &                                                                         &                                                                                                       &                                                                                                                    &                                                                                                                                         \\ \cline{5-6} 
                                                                 &                                                                       &                                                                         &                                                                                                       & \multirow{3}{*}{50$\rightarrow$100}                                                                                & \multirow{3}{*}{\begin{tabular}[c]{@{}c@{}}\rule{0pt}{2ex}400K samples\\ Figure \ref{fig:continuous_twitter_50_100}\end{tabular}}       \\
                                                                 &                                                                       &                                                                         &                                                                                                       &                                                                                                                    &                                                                                                                                         \\
                                                                 &                                                                       &                                                                         &                                                                                                       &                                                                                                                    &                                                                                                                                         \\ \hline
\multirow{2}{*}{\rule{0pt}{4ex}Twitter}                          & \multirow{2}{*}{\rule{0pt}{4ex}Byte-Pair}                             & \multirow{2}{*}{\rule{0pt}{4ex}90K}                                     & \begin{tabular}[c]{@{}c@{}}\rule{0pt}{2ex}Baseline solvers\\ 6000/2048\end{tabular}                   & 100$\rightarrow$50                                                                                                 & \multirow{2}{*}{\begin{tabular}[c]{@{}c@{}}\rule{0pt}{3ex}90K samples\\ Figure \ref{fig:results_twitter_bpemb}\end{tabular}}            \\ \cline{4-5}
                                                                 &                                                                       &                                                                         & \begin{tabular}[c]{@{}c@{}}\rule{0pt}{2ex}Continuous solvers\\ 88K/2048\end{tabular}                  & 100$\rightarrow$50                                                                                                 &                                                                                                                                         \\ \hline
\multirow{3}{*}{\rule{0pt}{6ex}MUSE}                             & \multirow{3}{*}{\rule{0pt}{6ex}Byte-Pair}                             & \multirow{3}{*}{\rule{0pt}{6ex}90K}                                     & \multirow{2}{*}{\begin{tabular}[c]{@{}c@{}}Baseline solvers\\ 6000/2048\end{tabular}}                 & \begin{tabular}[c]{@{}c@{}}\rule{0pt}{2ex}100(English)\\ $\rightarrow$\\ 50(English)\end{tabular}                  & \begin{tabular}[c]{@{}c@{}}\rule{0pt}{4ex}90K samples\\ Figure \ref{fig:discrete_muse_bp_100_50_400K}\end{tabular}     \\
                                                                 &                                                                       &                                                                         &                                                                                                       &                                                                                                                    &                                                                                                                                         \\ \cline{4-6} 
                                                                 &                                                                       &                                                                         & \begin{tabular}[c]{@{}c@{}}Continuous solvers\\ 88K/2048\end{tabular}                                 & \begin{tabular}[c]{@{}c@{}}\rule{0pt}{2ex}100(English)\\ $\rightarrow$\\ 50(English)\end{tabular}                  & \begin{tabular}[c]{@{}c@{}}\rule{0pt}{2ex}90K samples\\ Figure \ref{fig:continuous_MUSE_BP_100_50}\end{tabular}                                                  \\ \hline
\multirow{2}{*}{\rule{0pt}{6ex}MUSE}                             & \multirow{2}{*}{\rule{0pt}{6ex}Byte-Pair}                             & \multirow{2}{*}{\rule{0pt}{6ex}60K}                                     & \begin{tabular}[c]{@{}c@{}}Baseline solvers\\ 6000/2048\end{tabular}                                  & \begin{tabular}[c]{@{}c@{}}\rule{0pt}{2ex}100(English)\\ $\rightarrow$\\ 100(Spanish)\end{tabular}                 & \multirow{2}{*}{\begin{tabular}[c]{@{}c@{}}\rule{0pt}{4ex}60K samples\\ Table \ref{table:muse_en_es_bp}\end{tabular}}                   \\ \cline{4-5}
                                                                 &                                                                       &                                                                         & \begin{tabular}[c]{@{}c@{}}Continuous solvers\\ 58K/2048\end{tabular}                                 & \begin{tabular}[c]{@{}c@{}}\rule{0pt}{2ex}100(English)\\ $\rightarrow$\\ 100(Spanish)\end{tabular}                 &                                                                                                                                         \\ \hline
\end{tabular}
\caption{Summary of experiments}
\label{table:experiments_summary}

\end{table}

\vspace{-2mm}

\subsection{Marginal metrics for main text experiments}\label{ap:marginal_metrics}
\vspace{-2mm}

Here we report marginal metrics for our two main experiments: Twitter-GloVe and MUSE-BPEmb on baselines and continuous approaches. These new plots correspond to the matching metrics in Figures \ref{fig:discrete_main_text} and \ref{fig:metrics_neuralgw}. The marginal metrics for the baselines can be found in Figure \ref{fig:discrete_marginals} and in Figure \ref{fig:continuous_marginals} for continuous solvers.

\begin{figure*}[h]

    \begin{subfigure}{1\textwidth}
    \centering
    \includegraphics[scale=0.27]{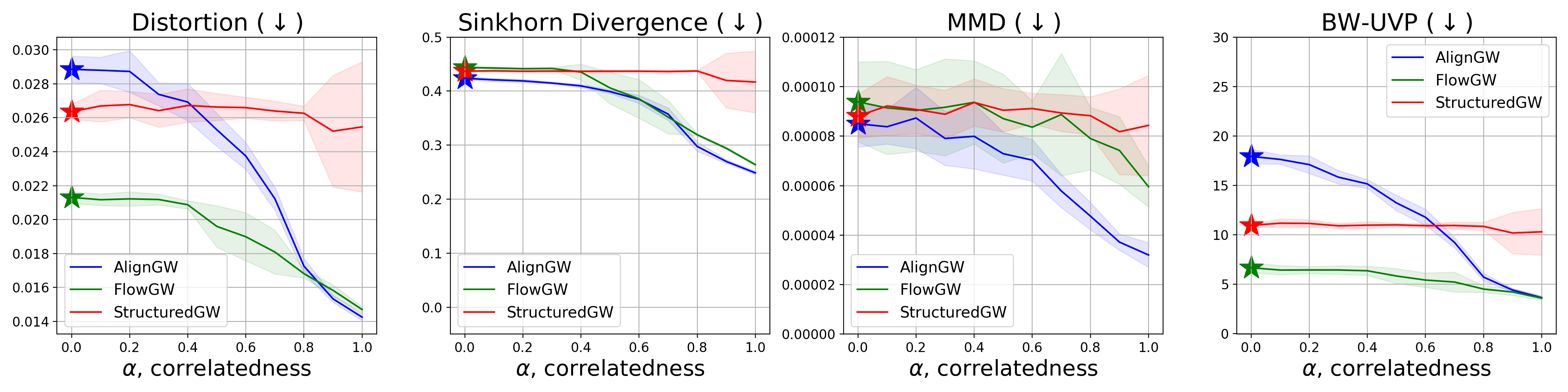} 
    \vspace{-1mm}\caption{\small Twitter-GloVe dataset ($\text{Source:}100\rightarrow \text{Target:}50$)}
    \label{fig:discrete_twitter_glove_100_50_marginals}
    \hspace{2mm}
    \end{subfigure}

    \vspace{-4mm}\begin{subfigure}{1\textwidth}
    \centering
    \includegraphics[scale=0.27]{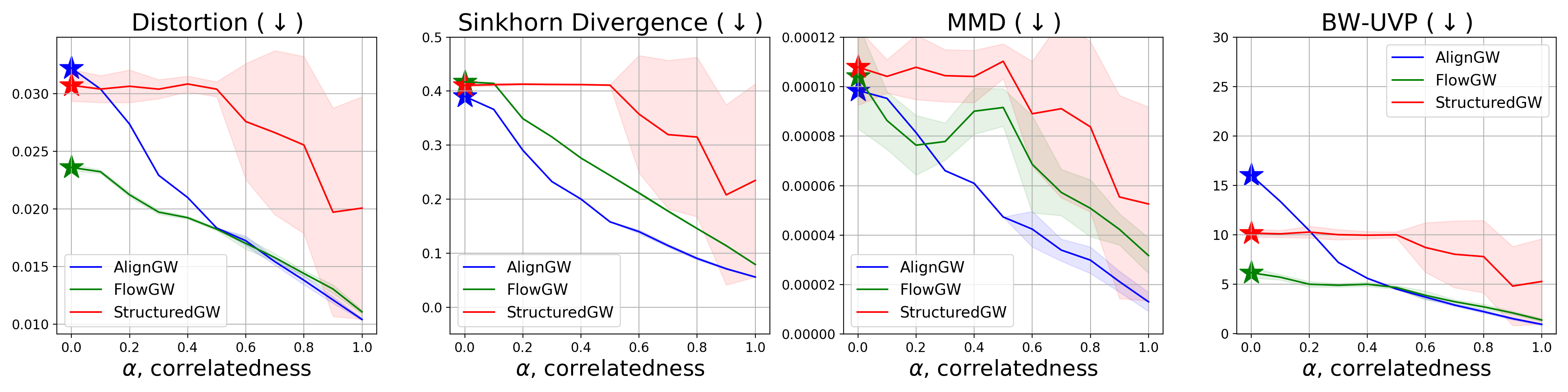} 
    \vspace{-3mm}\caption{\small MUSE-BPEmb (English) dataset ($\text{Source:}100\rightarrow \text{Target:}50$)}
    \label{fig:discrete_marginals_muse_bp_100_50}
    \end{subfigure}

    \vspace{-2mm}
    
    \caption{\small Marginal metrics for the baseline GWOT solvers for \textbf{Twitter-GloVe} (\ref{fig:discrete_twitter_glove_100_50_marginals}) and \textbf{MUSE-BPEmb (English)} (\ref{fig:discrete_marginals_muse_bp_100_50}) at different correlatedness levels $\alpha$ for the $100 \rightarrow 50$ setup. Solvers were trained with $N_{\text{train}}/2 = 3000$ samples from spaces of 400K and 90K, respectively; the plot shows results on a 2048-sample test subset. Accuracy metrics were computed over the combined \emph{reference} space of $N_{\text{train}} + N_{\text{test}} = 8048$ samples.}
    \label{fig:discrete_marginals}
    \vspace{-5mm}
\end{figure*}

\begin{figure*}[h]

    \begin{subfigure}{1\textwidth}
    \centering
    \includegraphics[scale=0.25]{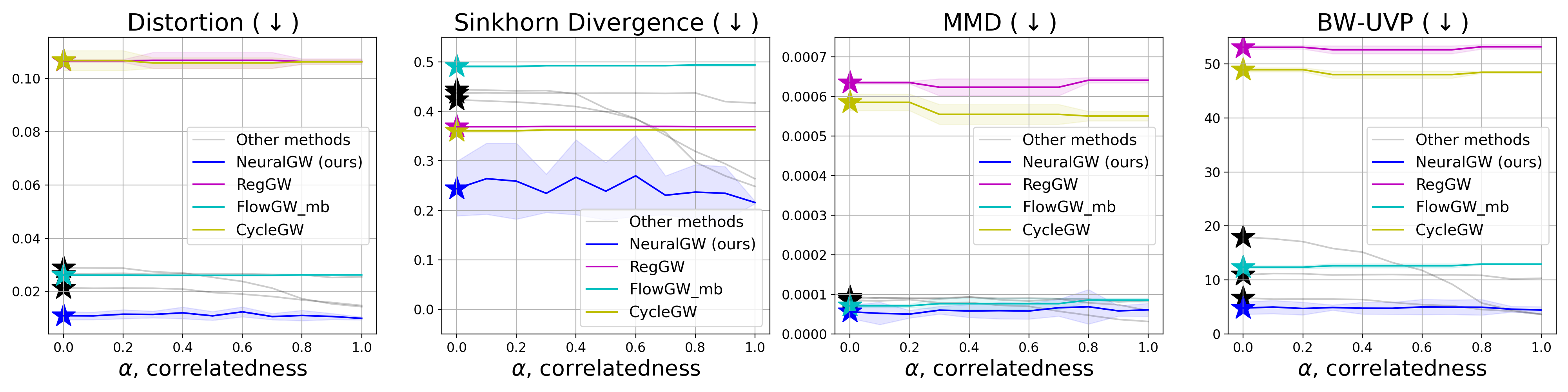} 
    \vspace{-1mm}\caption{\small Twitter-GloVe dataset ($\text{Source:}100\rightarrow \text{Target:}50$)}
    \label{fig:continuous_marginals_twitter_glove_100_50}
    \hspace{2mm}
    \end{subfigure}

    \vspace{-4mm}\begin{subfigure}{1\textwidth}
    \centering
    \includegraphics[scale=0.25]{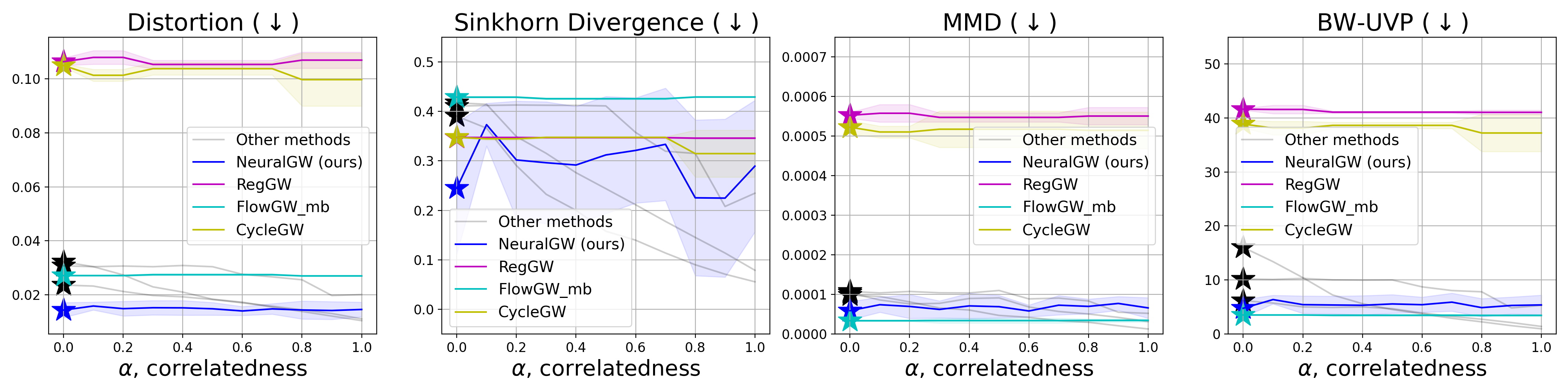} 
    \vspace{-3mm}\caption{\small MUSE-BPEmb (English) dataset ($\text{Source:}100\rightarrow \text{Target:}50$)}
    \label{fig:continuous_marginals_muse_bp_100_50}
    \end{subfigure}

    \vspace{-2mm}
    
    \caption{\small Marginal metrics for the continuous GWOT solvers for \textbf{Twitter-GloVe} (\ref{fig:continuous_marginals_twitter_glove_100_50}) and \textbf{MUSE-BPEmb (English)} (\ref{fig:continuous_marginals_muse_bp_100_50}) at different correlatedness levels $\alpha$ for the $100 \rightarrow 50$ setup. Solvers were trained with $N_{\text{train}}/2 = 3000$ samples from spaces of 400K and 90K, respectively; the plot shows results on a 2048-sample test subset. Accuracy metrics were computed over the combined \emph{reference} space of $N_{\text{train}} + N_{\text{test}} = 8048$ samples.}
    \label{fig:continuous_marginals}
    \vspace{-5mm}
\end{figure*}

\vspace{1mm}
\textbf{Conclusion.} We observe that marginal metrics correlate moderately with matching metrics, indicating that most methods align the marginal distributions to some extent. However, these metrics alone are insufficient to evaluate solver performance, as they ignore the structural alignment central to Gromov-Wasserstein objectives. While useful for assessing how target-like the transported samples are, marginal metrics should be considered complementary. Notably, our method, NeuralGW, consistently outperforms baselines in both structural and marginal alignment, and is the only approach capable of accurately matching marginals across all correlation levels between source and target domains.

\vspace{-2mm}
\subsection{Trained on small dataset, evaluated w.r.t. large \textit{reference}}\label{ap:main_experiments_whole_reference}
\vspace{-2mm}

The accuracy evaluation involves identifying the $k$-nearest neighbours for a given vector within a target vector space. As discussed in Section \wasyparagraph \ref{subsec:benchmark}, this space was limited to a small subset of $N_{train} + N_{test}$ samples for the experiments included in the main text, Figure \ref{fig:discrete_main_text}. This is justified since the methods were trained on a similar number of samples. However, evaluating accuracy across the entire data space could offer deeper insights into the models' ability to capture the intrinsic structures of the probability distributions. Therefore, in Figure  \ref{fig:small_train-large_test} we show the Top-$k$ accuracies computed with respect to the \textit{whole reference space}. The other metrics are not affected.

\vspace{-2mm}
\begin{figure}[!hbt]
    \begin{subfigure}{1\textwidth}
    \centering
    \includegraphics[scale=0.26]{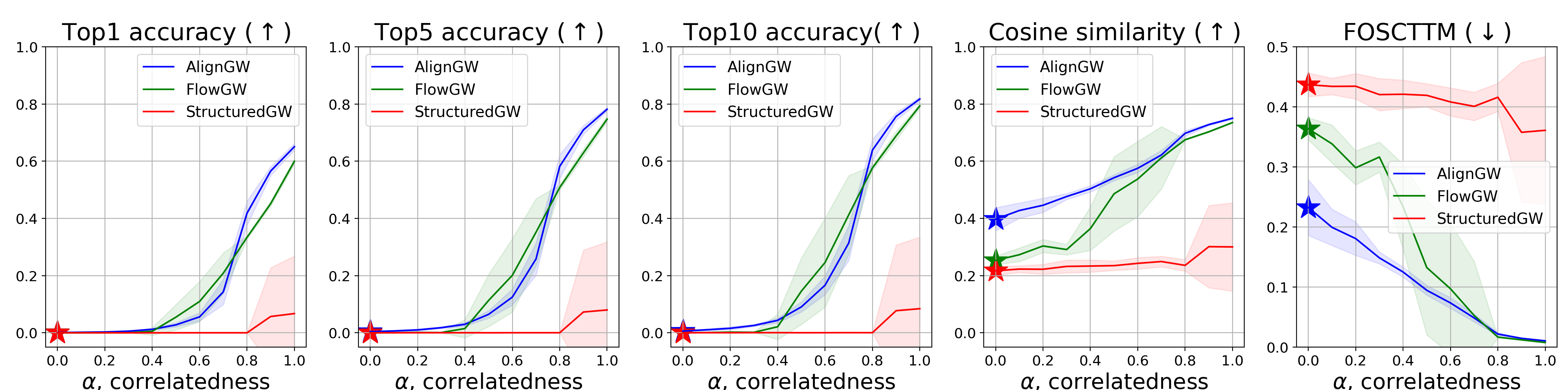} 
    \caption{\small Twitter-GloVe dataset ($\text{Source:}100\rightarrow \text{Target:}50$)}
    \label{fig:discrete_twitter_glove_100_50_400K}
    
    \hspace{2mm}
    \end{subfigure}

    \vspace{-2mm}\begin{subfigure}{1\textwidth}
    \centering
    \includegraphics[scale=0.26]{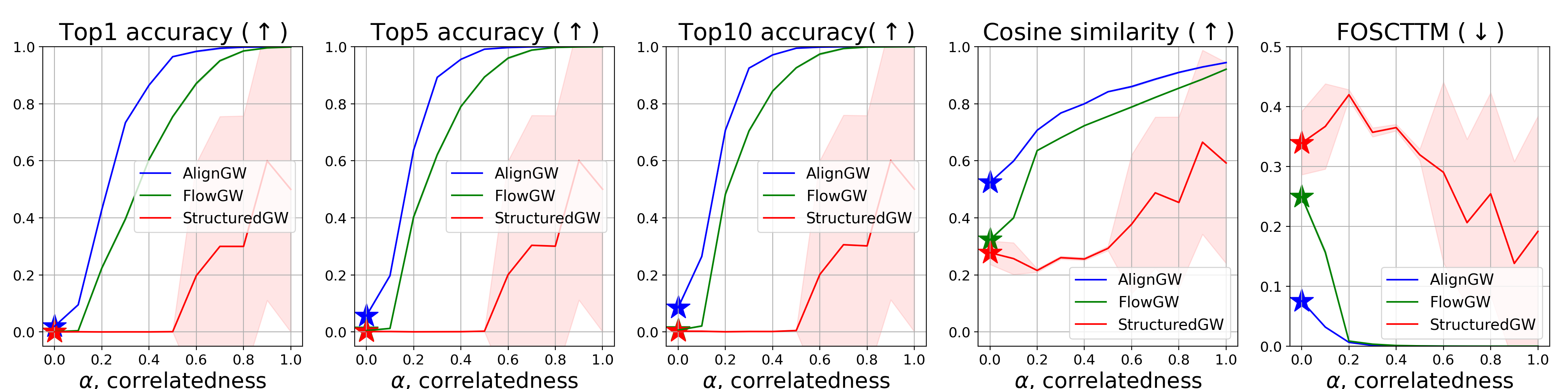} 
    \caption{\small MUSE-BPEmb (English) dataset ($\text{Source:}100\rightarrow \text{Target:}50$)}
    \label{fig:discrete_muse_bp_100_50_400K}
    
    \hspace{2mm}

    \end{subfigure}
    
    \vspace{-5mm}\caption{\small Performance of the baseline GWOT solvers for \textbf{Twitter-GloVe} (\ref{fig:discrete_twitter_glove_100_50_400K}) and \textbf{MUSE-BPEmb (English)} (\ref{fig:discrete_muse_bp_100_50_400K}) at different correlatedness levels $\alpha$ for the $100 \rightarrow 50$ setup. Solvers were trained with $N_{\text{train}}/2 = 3000$ samples from spaces of 400K and 90K, respectively; the plot shows results on a 2048-sample test subset. Accuracy metrics were computed over the \emph{whole reference} space of $400$K and $90$K samples, respectively.}
    \label{fig:small_train-large_test}
    \vspace{-3mm}
\end{figure}

\textbf{Conclusion.} The performance of the baseline solvers drops significantly when computing the accuracies on the \textit{whole reference} space. This decline is expected, as these methods are trained using a limited subset of the data, which restricts their ability to generalize effectively to unseen samples. 

\vspace{-3mm}
\subsection{Experiments on correlated batches for continuous solvers}\label{ap:paired_batches}

\vspace{-2mm}
As mentioned at the end of Subsection \S\ref{sec:exp_neuralgw}, we also report the performance of the continuous solvers when trained on correlated batches with under different correlation levels~$\alpha$. Figure \ref{fig:continuous_paired_twitter_glove_10_50} presents the results for one of our main experimental settings—Twitter-GloVe, from dimension 100 to 50—originally introduced in Figure \ref{fig:continuous_twitter_glove_100_50}. The total number of training samples remains fixed at $N_{\text{train}} = 400$K, and we use batches of 500 samples across all methods. Further details on the hyperparameters are provided in Appendix \ref{ap:code}. We omit the MUSE-BPEmb case, as we expect qualitatively similar results.

\vspace{-1mm}
\begin{figure*}[h]

    \centering
    \includegraphics[scale=0.22]{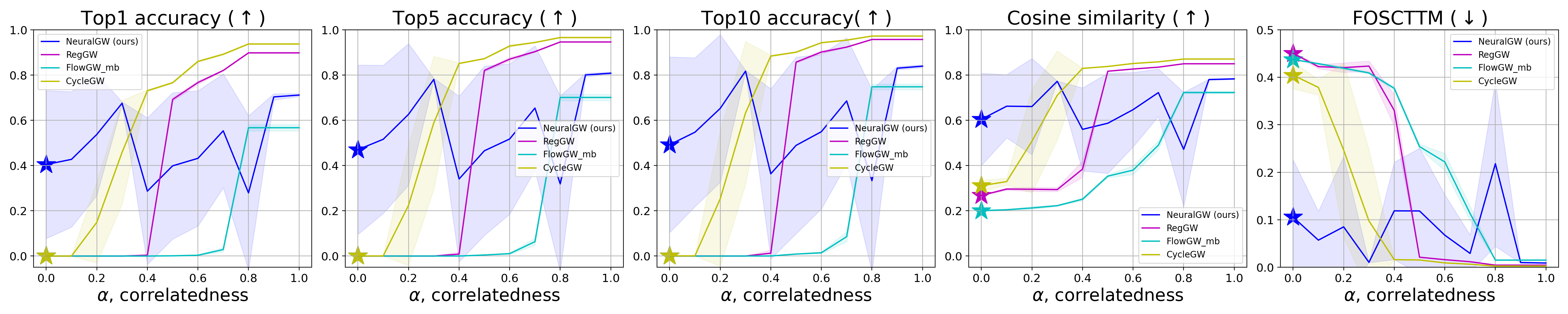} 
    \caption{\small Matching metrics for continuous methods trained on \textbf{correlated batches} on the Twitter-GloVe dataset ($\text{Source:}100\rightarrow \text{Target:}50$). $N_{train}=400$K, $n_{batch}=500$ and the Top-$k$ accuracy are computed on the \textit{whole} vector space.}
    \label{fig:continuous_paired_twitter_glove_10_50}

\end{figure*}
\textbf{Conclusion.} As mentioned in the main text, it is important to clarify that this setup is both unfair and unrealistic, as it requires providing correlated batches to the solvers. We observe that this tweak significantly boosts performance compared to the fair setup, which confirms the correct implementation of the solvers and their proper functionality under specific, but artificial, conditions.

\vspace{-3mm}
\subsection{GloVe}\label{ap:experiments-Glove}
\vspace{-2mm}

Here we provide additional results for different experimental setups for the GloVe embeddings.

\textbf{Experiments on additional dimensionalities for baseline and continuous solvers}

Figures \ref{fig:discrete_additional} and \ref{fig:continuous_additional} show the performance of the baseline and continuous methods, respectively, across additional dimensional configurations of the GloVe embedding. Specifically, we consider the transformations $50 \rightarrow 100$, $50 \rightarrow 25$, and $25 \rightarrow 50$. Accuracies are reported over the entire reference space, consisting of 400K samples. The parameters, training and test samples are the same as in the original experiments. For the continuous solvers we exclude the $25\rightarrow 50$ setup as it produces a result similar to $50\rightarrow 100$ due to the inability of the continuous solvers to transform low-to-high dimensional manifolds.

\begin{figure}[!hbt]
    \begin{subfigure}{1\textwidth}
    \centering
    \includegraphics[scale=0.26]{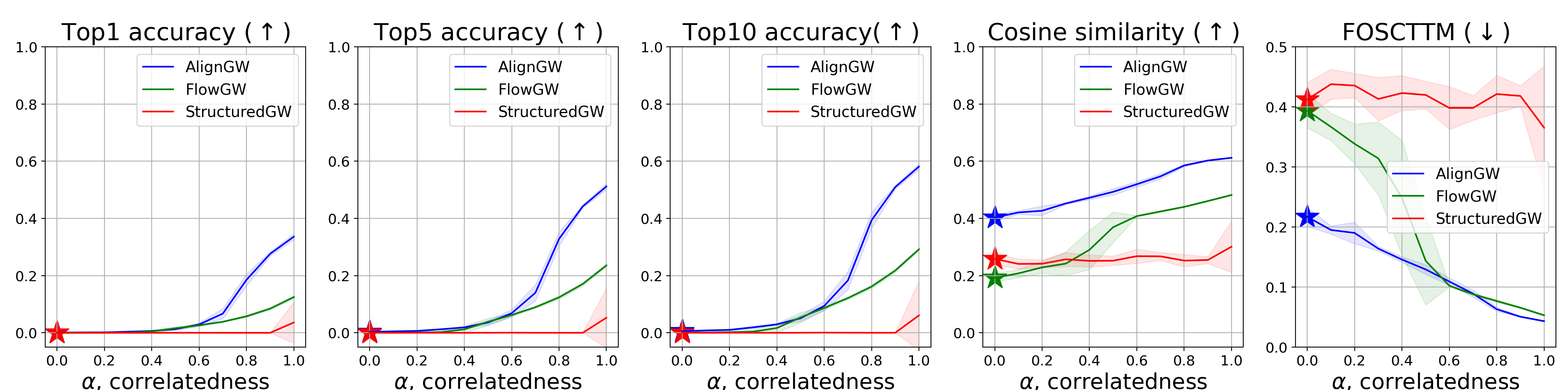} 
    \caption{\small Source: 50 $\rightarrow$ Target:100}
    \label{fig:discrete_twitter_50_100_400K}
    
    \hspace{2mm}
    \end{subfigure}

   \begin{subfigure}{1\textwidth}
    \centering
    \includegraphics[scale=0.26]{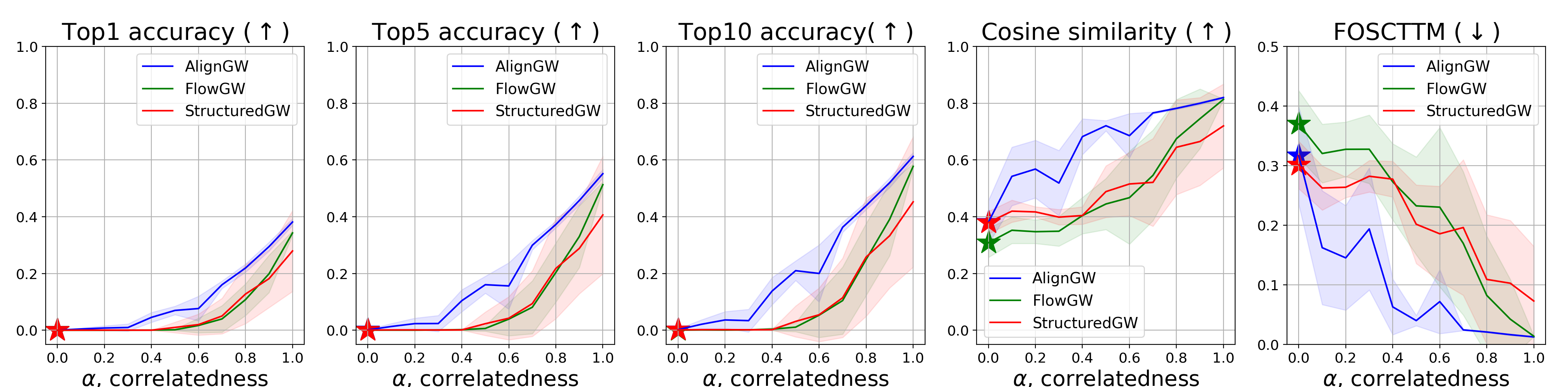} 
    \caption{\small Source: 50 $\rightarrow$ Target:25}
    \label{fig:discrete_twitter_50_25_400K}
    \end{subfigure}

    \begin{subfigure}{1\textwidth}
    \centering
    \includegraphics[scale=0.26]{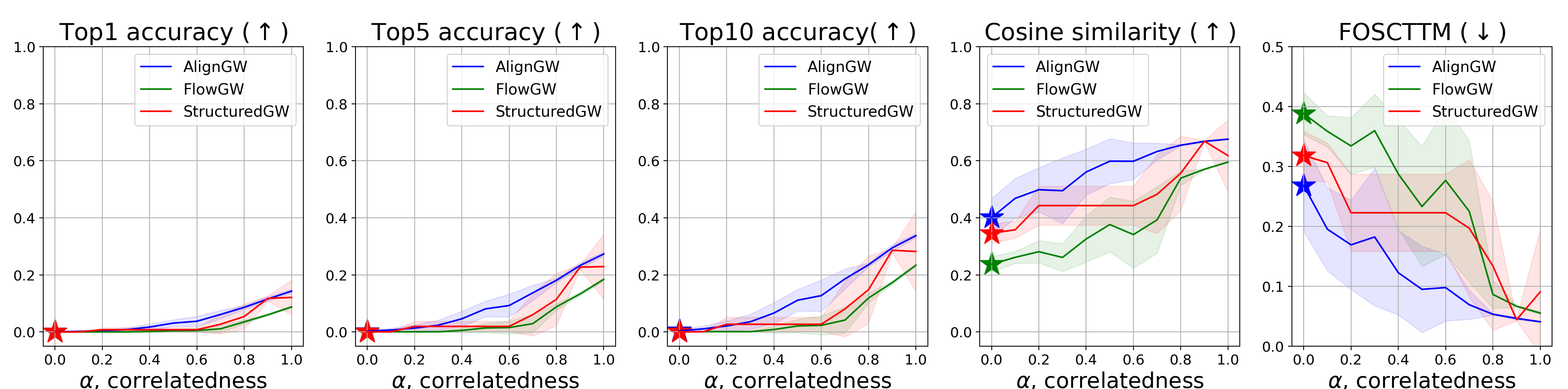} 
    \caption{\small Source: 25 $\rightarrow$ Target:50}
    \label{fig:discrete_twitter_25_50_400K}
    \end{subfigure}
    
    \caption{\small Performance of baseline GWOT solvers for the \textbf{Twitter-GloVe} embeddings at different levels of correlatedness $\alpha$ in additional dimensional setups. The accuracies were computed using the \textit{whole reference} space of $400$K samples.}
    \label{fig:discrete_additional}
\end{figure}

\begin{figure}[!hbt]
    \begin{subfigure}{1\textwidth}
    \centering
    \includegraphics[scale=0.26]{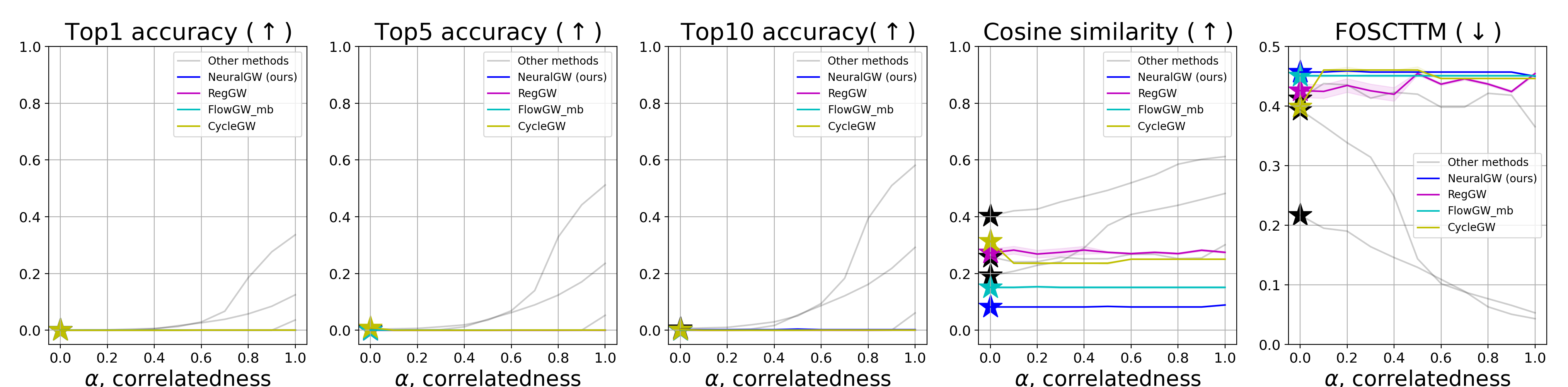} 
    \caption{\small Source: 50 $\rightarrow$ Target:100}
    \label{fig:continuous_twitter_50_100}
    
    \hspace{2mm}
    \end{subfigure}

    \begin{subfigure}{1\textwidth}
    \centering
    \includegraphics[scale=0.24]{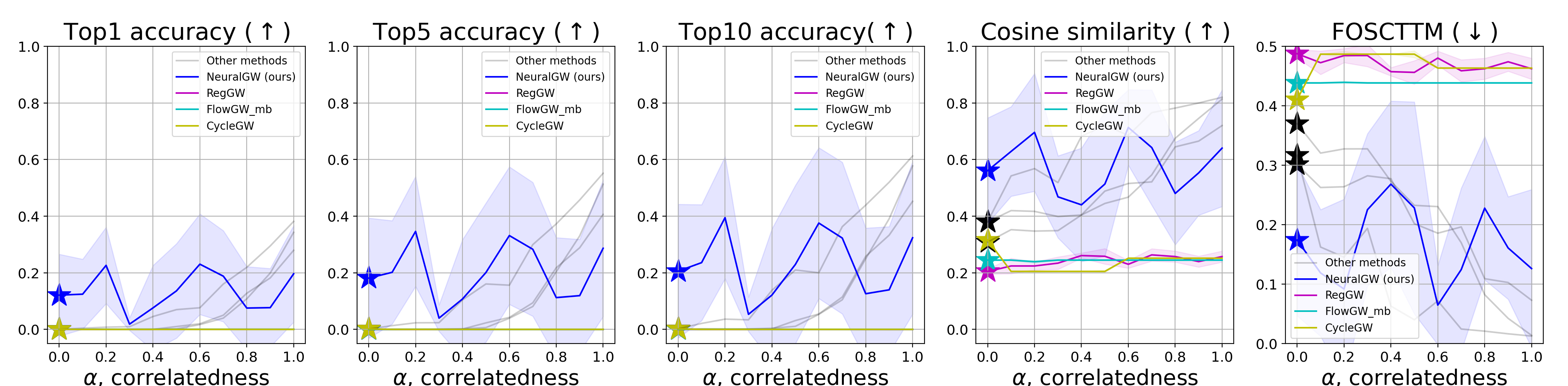} 
    \caption{\small Source: 50 $\rightarrow$ Target:25}
    \label{fig:continuous_twitter_50_25}
    \end{subfigure}
    
    \vspace{-2mm}\caption{\small Performance of the continuous GWOT solvers for the \textbf{Twitter-GloVe} embeddings at different levels of correlatedness $\alpha$ in additional dimensional setups.}
    \label{fig:continuous_additional}
\end{figure}

\textbf{Conclusion.} The performance of both baseline and continuous solvers drops significantly when the embedding dimensionality is reduced. This is expected, as lower-dimensional vectors may fail to adequately capture the structural features of the space. Conversely, none of the continuous solvers, including our NeuralGW, are able to effectively handle low-to-high dimensional setups.

\textbf{Additional experiments for NeuralGW for high-to-low dimension in small dataset ($\textbf{N}_{\textbf{train}}=6$K).} Here we consider the same setup used for the baseline methods in \wasyparagraph \ref{subsec:benchmark} to show how a neural approach performs when it is trained using a limited amount of samples, $N_{train}=6$K, see Table \ref{table:neuralgw_small}.

\begin{table}[!hbt]
  \centering
\begin{tabular}{c|c|c|c|c|c|c}
\hline
\multicolumn{1}{l|}{Dimensions}     & \textbf{Correlatedness} & \multicolumn{1}{l|}{\textbf{Top 1}} & \multicolumn{1}{l|}{\textbf{Top 5}} & \multicolumn{1}{l|}{\textbf{Top 10}} & \multicolumn{1}{l|}{\textbf{Cosine similarity}} & \multicolumn{1}{l}{\textbf{FOSCTTM}} \\ \hline
\multirow{3}{*}{100$\rightarrow$50} & \textbf{$\alpha = 0.2$} & 0.000                               & 0.000                               & 0.003                                & 0.100                                           & 0.462                                \\
                                    & \textbf{$\alpha = 0.5$} & 0.000                               & 0.001                               & 0.004                                & 0.107                                           & 0.470                                \\
                                    & \textbf{$\alpha = 1.0$} & 0.000                               & 0.003                               & 0.005                                & 0.117                                           & 0.447                                \\ \hline
\multirow{3}{*}{50$\rightarrow$25}  & \textbf{$\alpha = 0.2$} & 0.000                               & 0.001                               & 0.003                                & 0.165                                           & 0.441                                \\
                                    & \textbf{$\alpha = 0.5$} & 0.001                               & 0.004                               & 0.007                                & 0.176                                           & 0.423                                \\
                                    & \textbf{$\alpha = 1.0$} & 0.000                               & 0.000                               & 0.004                                & 0.174                                           & 0.435                                \\ \hline
\multirow{3}{*}{50$\rightarrow$100} & \textbf{$\alpha = 0.2$} & 0.000                               & 0.002                               & 0.003                                & 0.086                                           & 0.455                                \\
                                    & \textbf{$\alpha = 0.5$} & 0.000                               & 0.003                               & 0.005                                & 0.084                                           & 0.459                                \\
                                    & \textbf{$\alpha = 1.0$} & 0.000                               & 0.002                               & 0.003                                & 0.089                                           & 0.450                                \\ \hline
\multirow{3}{*}{25$\rightarrow$50}  & \textbf{$\alpha = 0.2$} & 0.000                               & 0.002                               & 0.005                                & 0.113                                           & 0.440                                \\
                                    & \textbf{$\alpha = 0.5$} & 0.000                               & 0.000                               & 0.001                                & 0.095                                           & 0.464                                \\
                                    & \textbf{$\alpha = 1.0$} & 0.001                               & 0.003                               & 0.005                                & 0.124                                           & 0.436                                \\ \hline
\end{tabular}
\vspace{0mm}
\caption{\small Performance of the NeuralGW solver for the \textbf{Twitter-GloVe} embeddings at different levels of correlatedness $\alpha$ in \textbf{all} setups.  The solver was trained with $N_{train}/2=3000$ samples from a whole space of $400$K, this plot shows results for a testing subset of $2048$ samples, the metrics were computed considering the whole $400$K samples \emph{reference} space.}
\label{table:neuralgw_small}

\end{table}

\textbf{Conclusion.}

NeuralGW is unable to model the inner structures when the number of training samples is small (6K). In general, adversarial algorithms like those used to train NeuralGW require plenty of data to obtain meaningful results.

\vspace{-3mm}
\subsection{BPEmb}
\label{ap:muse_experiments}
\vspace{-1mm}

\textbf{MUSE-BPemb, source: English (100) $\rightarrow$ target: Spanish (100)} We considered two different languages for source and target, the dimension of embedding was equal. The total number of samples was $N=60$K, $N_{train}=6$K for baseline solvers and $N_{train}=57$K for NeuralGW, $N_{test}=2048$ for both cases. The metrics were computed using the whole \emph{reference} space. See Table \ref{table:muse_en_es_bp} for the results.

\begin{table}[!ht]
\centering
\begin{tabular}{lcccccc}
                                \cline{1-7}       &  \multicolumn{6}{c}{\textbf{Baseline Solvers}}                                                                                                                                                                                                                                                                                                                                                                                                                                                        \\ \cline{1-7} 
\multicolumn{1}{l|}{}                  & \multicolumn{2}{c|}{\textbf{FlowGW}}                                                                                                & \multicolumn{2}{c|}{\textbf{AlignGW}}                                                                                               & \multicolumn{2}{c}{\textbf{StructuredGW}}                                                                                                                                                                                \\ \hline
\multicolumn{1}{l|}{\textbf{$\alpha$}} & \small\textbf{Top 10} & \multicolumn{1}{c|}{\small\textbf{FOSCTTM}} & \small\textbf{Top 10} & \multicolumn{1}{c|}{\small\textbf{FOSCTTM}} & \small\textbf{Top 10}                                                      & \small\textbf{FOSCTTM}                                                      \\ \hline
\multicolumn{1}{l|}{\textbf{0.0}}      & 0.0013                                                & \multicolumn{1}{c|}{0.4222}                                                 & 0.0012                                                & \multicolumn{1}{c|}{0.4056}                                                 & 0.0000                                                                                                     & 0.5037                                                                                                      \\
\multicolumn{1}{l|}{\textbf{0.5}}      & 0.0029                                                & \multicolumn{1}{c|}{0.3983}                                                 & 0.0022                                                & \multicolumn{1}{c|}{0.3768}                                                 & 0.0006                                                                                                     & 0.4541                                                                                                      \\
\multicolumn{1}{l|}{\textbf{1.0}}      & 0.0267                                                & \multicolumn{1}{c|}{0.3321}                                                 & 0.0146                                                & \multicolumn{1}{c|}{0.3085}                                                 & 0.0009                                                                                                     & 0.4443                                                                                                      \\ \hline
                                       & \multicolumn{6}{c}{\textbf{Continuous solvers}}                                                                                                                                                                                                                                                                                                                                                                                                                                                      \\ \cline{1-7} 
\multicolumn{1}{l|}{}                  & \multicolumn{2}{c|}{\textbf{RegGW}}                                                                                                 & \multicolumn{2}{c|}{\textbf{FlowGW (mb)}}                                                                                           & \multicolumn{2}{c}{\textbf{NeuralGW (ours)}}                                                                                                                                                                             \\ \hline
\multicolumn{1}{c|}{\textbf{$\alpha$}} & \small\textbf{Top 10} & \multicolumn{1}{c|}{\small\textbf{FOSCTTM}} & \small\textbf{Top 10} & \multicolumn{1}{c|}{\small\textbf{FOSCTTM}} & \begin{tabular}[c]{@{}c@{}}\small\textbf{Top 10}\\ mean (std)\end{tabular} & \begin{tabular}[c]{@{}c@{}}\small\textbf{FOSCTTM}\\ mean (std)\end{tabular} \\ \hline
\multicolumn{1}{l|}{\textbf{0.0}}      & 0.0001                                                & \multicolumn{1}{c|}{0.4521}                                                 & 0.0006                                                & \multicolumn{1}{c|}{0.4767}                                                 & 0.0518 (0.0633)                                                                                            & 0.3811 (0.1286)                                                                                             \\
\multicolumn{1}{l|}{\textbf{0.5}}      & 0.0009                                                & \multicolumn{1}{c|}{0.4411}                                                 & 0.0007                                                & \multicolumn{1}{c|}{0.4770}                                                 & 0.0284 (0.0564)                                                                                            & 0.4379 (0.1067)                                                                                             \\
\multicolumn{1}{l|}{\textbf{1.0}}      & 0.0013                                                & \multicolumn{1}{c|}{0.4384}                                                 & 0.0008                                                & \multicolumn{1}{c|}{0.4772}                                                 & 0.0748 (0.0751)                                                                                            & 0.3514 (0.1321)                                                                                             \\ \hline
\end{tabular}

\caption{Results for MUSE dataset for English and Spanish as source and target languages, respectively, both are 100-dimensional BP embeddings. }
\label{table:muse_en_es_bp}
\end{table}

\textbf{Conclusion.} We tested a selection of methods for this experiment, but none of them, whether baseline or continuous approaches, were able to produce successful results in this setup. It is worth noting that the MUSE dataset was used in the original AlignGW paper; however, the type of embedding was different, resulting in a higher embedding dimensionality, which may have contributed to better performance in that context.

\textbf{Twitter dataset with different dimension of embeddings:} For this case we considered the same dataset as for the GloVe experiments, but we changed the type of embedding to BPEmb, only the experiment for source: 100 $\rightarrow$ target: 50 was performed. The total number of samples was $N=90$K, $N_{train}=6$K for baseline solvers and $N_{train}=87$K for NeuralGW, $N_{test}=2048$ for both cases. The metrics were computed as in the previous experiment. See Figure \ref{fig:results_twitter_bpemb} for the results.

\begin{figure}[!hbt]
    \begin{subfigure}{1\textwidth}
    \centering
    \includegraphics[scale=0.28]{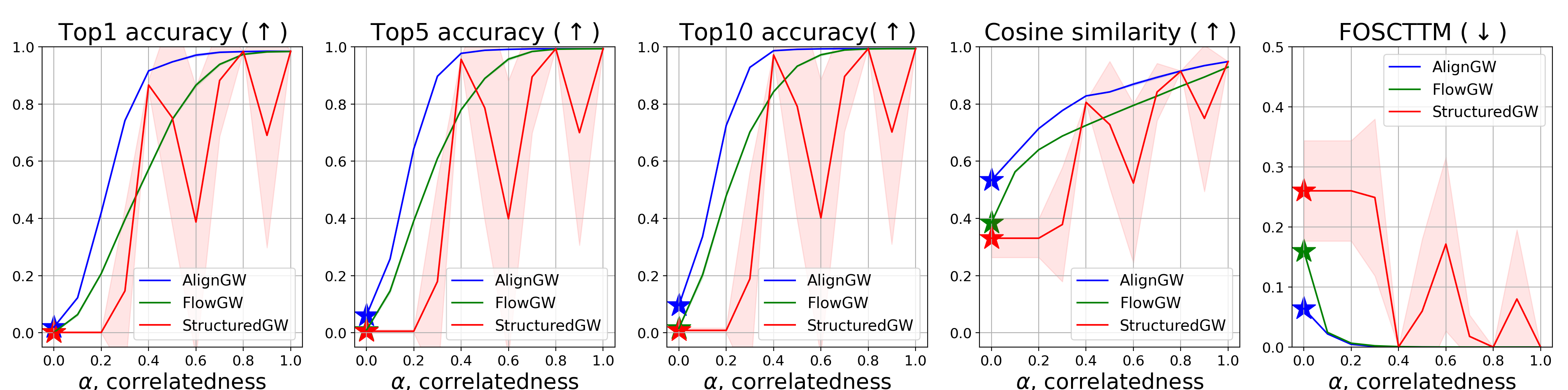} 
    \caption{Results for baseline solvers.}
    \label{fig:bp_baselines_twitter_100_50}
    \hspace{2mm}
    \end{subfigure}

    \begin{subfigure}{1\textwidth}
    \centering
    \includegraphics[scale=0.28]{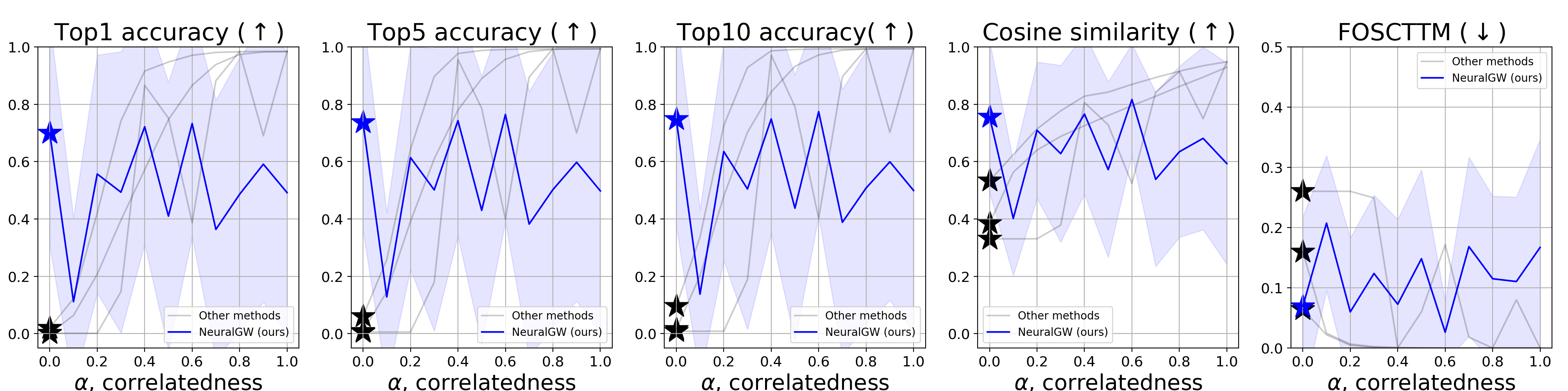} 
    \caption{Result for NeuralGW.}
    \label{fig:bp_continuous_twitter_100_50}
    
    \hspace{2mm}
    \end{subfigure}
    \vspace{-4mm}
    \caption{\small Performance of the baseline and batched GWOT solvers for the \textbf{Twitter-Byte-Pair} embeddings at different levels of correlatedness $\alpha$ in the $100\rightarrow50$ setup. \textbf{(a)} Baseline solvers trained with $N_{train}/2=3000$ samples from a whole space of $90$K. \textbf{(b)} Batched solvers trained with $N_{train}/2=43$K samples. In both cases the plot shows results for a testing subset of $2048$ samples, the metrics were computed considering the whole $90$K samples \emph{reference} space.}
    \label{fig:results_twitter_bpemb}
\end{figure}

For the sake of clarity, we only computed the metrics in the whole target space as in the second additional experiment in Appendix \ref{ap:experiments-Glove}. The number of training samples was kept $N_{train}=6$K for baseline solvers and $N_{train}=87$K for NeuralGW, $N_{test}=2048$ for both cases.

\textbf{Conclusion.}
The obtained results for the Twitter-BPEmb dataset clearly demonstrate the same trends as for GloVe embeddings (Sections \ref{subsec:benchmark} and \ref{sec:exp_neuralgw} of the main text). For both MUSE and Twitter datasets, our NeuralGW turns out to be the only method capable to leverage uncorrelated ($\alpha = 0$) setup when learning the map between embeddings of the same language but with different dimensions. At the same time, dealing with more complex \textit{iter}-language setup turns out to be more difficult and none of the solvers have reached satisfactory quality on this setting. This underlines the inherent complexity of the GW problem.

\vspace{-2mm}
\subsection{Biological dataset}\label{ap:experiments-biology}
\vspace{-2mm}

As seen in the previous experiments, the baselines and NeuralGW solvers have their own limitations and drawbacks heavily linked to their nature. However, in spite of their independent performance, they could partially recover the inner geometry of the domains. In this section, we propose a stress test scenario in which the solvers of the conventional GWOT problem yield poor results.

We explore the performance of the baselines solvers and NeuralGW in a biological dataset called bone marrow \citep{luecken2021a} which is considered in the method that we named FlowGW \citep{klein2023generative}. This dataset consists of 6224 samples of two different RNA profiling methods (ATAC+GEX and ADT+GEX), the samples in each domain belong to the same donors, therefore, the real pairs are known. The dimensionality for source and target domains are 38 and 50, respectively. Results can be found in Table \ref{table:bone_marrow}. All the solvers are tested using 5000 samples for training and the rest for testing.

\begin{table}[!hbt]
  \centering
\begin{tabular}{lcc|cc|cc|cc}
\cline{2-9}
                                    & \multicolumn{2}{c|}{\textbf{FlowGW}}                                     & \multicolumn{2}{c|}{\textbf{AlignGW}}                                    & \multicolumn{2}{c|}{\textbf{StructuredGW}}                               & \multicolumn{2}{c}{\textbf{NeuralGW}}                                    \\ \hline
\multicolumn{1}{l|}{\textbf{$\alpha$}}       & \small\textbf{Top 10} & \small\textbf{FOSCTTM} & \small\textbf{Top 10} & \small\textbf{FOSCTTM} & \small\textbf{Top 10} & \small\textbf{FOSCTTM} & \small\textbf{Top 10} & \small\textbf{FOSCTTM} \\ \hline
\multicolumn{1}{l|}{\textbf{\textbf{0.2}}} & 0.004           & 0.486                                                  & 0.004           & 0.488                                                  & 0.001           & 0.489                                                  & 0.003           & 0.479                                                  \\
\multicolumn{1}{l|}{\textbf{\textbf{0.5}}} & 0.004           & 0.489                                                  & 0.004           & 0.483                                                  & 0.004           & 0.487                                                  & 0.003           & 0.514                                                  \\
\multicolumn{1}{l|}{\textbf{\textbf{1.0}}} & 0.004           & 0.49                                                   & 0.004           & 0.484                                                  & 0.004           & 0.49                                                   & 0.003           & 0.459                                                  \\ \hline
\end{tabular}
\vspace{-2mm}
  \caption{Results for bone marrow dataset.}
  \label{table:bone_marrow}
  \vspace{-3mm}
\end{table}
\vspace{-1mm}

\textbf{Conclusion.} In all the cases, the solvers could not properly replicate the inner geometry of the distributions even for totally correlated setups, this leaded to get metrics corresponding to random guessing, i.e. accuracies close to 0 and FOSCTTM close to 0.5. There is one case of success for a solver trained on this dataset which corresponds to FlowGW \citep{klein2023generative}, however, the reported results in their paper were obtained using a fused-GW solver instead of a conventional GW.

Finally, we can state that there is no general solver for the GWOT problem, all the currently available methods struggle when dealing with real world scenarios, i.e. uncorrelated data, or with real world datasets, i.e. not consistent inner structures.  
\vspace{-4mm}
\section{Solvers' implementation details}
\label{ap:code}
\vspace{-4mm}

All the experiments were done without any normalization for the source and target vectors and for all the studied methods (baselines and NeuralGW). A total of ten repetitions were performed. It is important to clarify that the parameters listed below are the ones we used to align the embeddings, they may require some tweaks to make them work in the toy setup.

\textbf{StructuredGW.\citep{sebbouh2024structured}} We used the code from the official repository:
\vspace{-1mm}
\begin{center}
\url{https://github.com/othmanesebbouh/prox_rot_aistats}
\end{center}

As specified in Section \wasyparagraph \ref{sec:related_work}, the algorithm uses an iterative solver that updates the cost matrix $\textbf{T}$ by implementing several methods depending on the type of regularization, we only use the exact computation without any regularization. The plan $\pi$ is also updated every iteration by performing Sinkhorn iterations, we set this number of iterations to 1000. The entropy is set to $\varepsilon=1e$-4. The total number of iterations is set to 200 or until convergence.

In this implementation, the authors use the Optimal Transport Tools (OTT) library \citep{cuturi2022optimal}. The computation time per repetition until convergence was 30 minutes in average on a CPU.

\textbf{AlignGW.\citep{alvarez2018gromov}} We use the official implementation of the method taken from the repository:
\vspace{-1mm}
\begin{center}
    \url{https://github.com/dmelis/otalign}
\end{center}

We set the entropy term to $\varepsilon=$1e-3 and use the cosine distance to compute the source and target intra-cost matrices $\textbf{C}^x$ and $\textbf{C}^y$. We later normalize them by dividing them by their respective means as proposed in the original implementation. The model was trained on a CPU and the average training time was 10 minutes per repeat. The implementation uses the POT library. We train a \texttt{scikit-learn}'s MLPRegressor on top of it as an inference method for test data, the parameters are: hidden\_layer\_sizes=256, random\_state=1, max\_iter=500.

\textbf{FlowGW. \citep{klein2023generative}} We used the implementation provided in the OTT library for the GENOT with slight modifications to adapt it to our pipeline. The hyperparameters for the vector field were as follows: Number of frequencies: 128, layers per block: 8, hidden dimension: 1024, activation function: SiLu, optimizer: AdamW (lr=1e-4). The Gromov-Wasserstein solver was set to work with entropy $\varepsilon=1$e-3 and using cosine distance to compute the intra-domain matrices.

\textbf{RegGW. \citep{uscidda2024disentangledrepresentationlearninggeometry}} For the sake of fairness, our implementation of this solver is based on the publicly available implementation for the Monge gap regularizer from the OTT library \citep{cuturi2022optimal}. To compute the Gromov-Wasserstein distance we used the GW solver from the library and took the entropy regularized cost. The following parameters were used for training: $\varepsilon_{fit}=0.01$, $\varepsilon_{reg}=0.001$, $\lambda = 1$. The transport model was parametrized as an MLP with $[512, 256, 256]$ dimensions for the hidden layers, the optimizer learning rate was $1$e-$4$ and a batch size of $256$.

\textbf{CycleGW \citep{Zhang2021CycleCP}}. We used the code provided by the authors in their repository:
\vspace{-1mm}
\begin{center}
    \url{https://github.com/ZhengxinZh/GMMD}
\end{center}

As in the original implementation, we used fully connected neural networks (FCNN) to parametrize $f$ and $g$, in both cases the network consisted on a single layer with $512$ neurons, and trained using the Adam optimizer with a learning rate of $1e$-3 (as suggested in the original paper). Both regularization parameters, $\lambda_x$ and $\lambda_y$ in the original paper, were set to $0.1$. The multiplier of the distortion term was set to $5$e-4. In spite of following the original implementation, it was not possible to make the solver work for our setups beyond the toy experiment.

\vspace{-3mm}
\subsection{NeuralGW.}
\label{ap:neuralgw_alg}
\vspace{-1mm}

The innerGW problem in \eqref{eq:innerGW} can be optimized using our Algorithm \ref{alg:alg_1}.

\vspace{-4mm}
\begin{algorithm}
\caption{Training algorithm for Neural Gromov-Wasserstein OT}\label{alg:alg_1}

\textbf{Input:}Distributions $\mathbb{P}$ and $\mathbb{Q}$ obtained from samples.

\textbf{Output:}Optimal rotation matrix $P_{\omega}$, critic $f_{\theta}$ and transport map $T_{\gamma}$.

\For{$i=1, 2, 3, \dots, n_{epochs}$}
{
    Sample batch from source and target distributions $X\sim\mathbb{P}$, $Y\sim\mathbb{Q}$.
    
    \For{$i=1, 2, 3, \dots, k_P$}
        {
            Compute $P$ loss $\mathcal{L}_P=-\frac{1}{N}\sum_{n=1}^N\langle \textbf{P}_{\omega}\textbf{x},T_{\gamma}(\textbf{x})\rangle$
            
            Gradient step over $\omega$ using $\frac{\partial\mathcal{L}_P}{\partial \omega}$

    \For{$j=1, 2, 3, \dots, k_f$}
    {

        \For{$k=1, 2, 3, \dots, k_T$}
        {
        Compute mover loss $\mathcal{L}_T=-\frac{1}{N}\sum_{n=1}^N\langle \textbf{P}_{\omega}\textbf{x},T_{\gamma}(\textbf{x})\rangle-\frac{1}{N}\sum_{n=1}^Nf_{\theta}(T(\textbf{x}))$
        
        Gradient step over $\gamma$ using $\frac{\partial\mathcal{L}_T}{\partial \gamma}$

        }
        Compute critic loss $\mathcal{L}_f=-\frac{1}{N}\sum_{n=1}^Nf_{\theta}(T_{\gamma}(\textbf{x}))-\frac{1}{N}\sum_{n=1}^Nf_{\theta}(\textbf{y})$
        
        Gradient step over $\theta$ using $\frac{\partial\mathcal{L}_c}{\partial \theta}$

    }
    }
}
\end{algorithm}

Every experiment runs for 100 epochs. Each epoch iterates over the whole dataset (400K, 90K or 6K samples). $f$ and $T$ are parametrized using multi-layer perceptrons with $n_l$ with width $h$. Matrix $\textbf{P}$ is parameterized by a single-layer linear neural network, we use GeoTorch \cite{lezcano2019trivializations} to constrain each row of its rows to lie on the unit sphere, ensuring row-wise normalization is maintained throughout training. These models are trained for $k_f,k_T$ and $k_p$ iterations, respectively. These parameters were chosen following the inner structure of the min-max-min problem, we optimized the inner transport model $T_{\gamma}$ more often than the others, from here we fine-tuned the values to get good metrics. The implementation details can be seen in Table \ref{table:neuralgw_details}. The batch size is 512 for the experiments with 400K and 90K samples and 64 for the experiments with 6K. Our code is written in \texttt{PyTorch}. Every epoch takes around to 3 minutes running on a GPU NVIDIA Tesla V100.

\begin{table}[!hbt]
  \caption{Parameters for NeuralGW.}
  \label{table:neuralgw_details}
  \centering
\begin{tabular}{l|c|c|c|c|c}
\hline
                                                                                                                    & Model & $k$ & $n_l$              & $h$                  & lr                    \\ \hline
\multirow{3}{*}{\begin{tabular}[c]{@{}l@{}}100$\rightarrow$50\\ 50$\rightarrow$25\\ 25$\rightarrow$50\end{tabular}} & $f$   & $k_f=$1   & \multirow{3}{*}{4} & \multirow{3}{*}{512} & \multirow{3}{*}{1e-4} \\ \cline{2-3}
                                                                                                                    & $T$   & $k_T=$10  &                    &                      &                       \\ \cline{2-3}
                                                                                                                    & $P$   & $k_P=$1   &                    &                      &                       \\ \hline
\multirow{3}{*}{50$\rightarrow$25}                                                                                  & $f$   & $k_f=$1   & \multirow{3}{*}{4} & \multirow{3}{*}{256} & \multirow{3}{*}{1e-4} \\ \cline{2-3}
                                                                                                                    & $T$   & $k_T=$10  &                    &                      &                       \\ \cline{2-3}
                                                                                                                    & $P$   & $k_P=$1   &                    &                      &                       \\ \hline
\end{tabular}
\end{table}

The loss for the critic $f$, i.e. $\mathcal{L}_f$ can use the well-known $R_1$  regularization \cite{mescheder2018trainingmethodsgansactually} adopted from GAN-like training methods. This regularized is used to stabilize our NeuralGW solver while training on word embeddings, the updated loss would be

$$\mathcal{L}_f=-\frac{1}{N}\sum_{n=1}^Nf_{\theta}(T_{\gamma}(\textbf{x}))-\frac{1}{N}\sum_{n=1}^Nf_{\theta}(\textbf{y})- \lambda \frac{1}{N}\sum_{i=1}^N \left[ \left\| \nabla_{\textbf{y}_i} f_{\theta}(\textbf{y}_i) \right\|_2^2 \right],$$ 

with $\lambda=0.1$. All the results for NeuralGW on word embeddings use this regularizer for training the critic.

\vspace{-5mm}
\section{Broader discussions}
\vspace{-2mm}
\subsection{Discrete GW solvers under low correlatedness data scenario}\label{app-discrete-gw-issue}
\vspace{-2mm}

\begin{wrapfigure}{r}{0.6\textwidth}
\centering\includegraphics[width=1.0\linewidth]{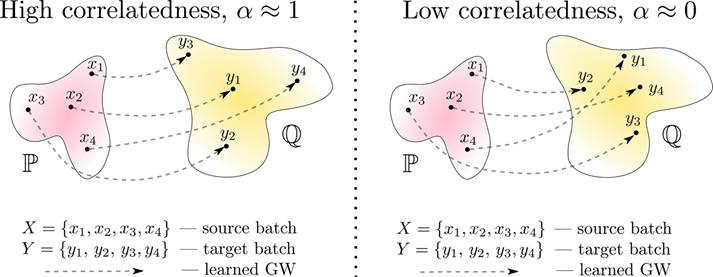} 
\caption{\centering\small Discrete GW maps fitted under high (left) and low (right) correlatedness level.}
\label{fig:discrete_gw_issue}
\vspace{-4mm}
\end{wrapfigure}

Our experimental results (Section \ref{subsec:benchmark} of the main text) testify that the discrete baseline solvers perform unsatisfactory when the data correlatedness level $\alpha$ tends to zero. We hypothesis that the main reason behind this behavior is as follows. Small amount of data used for discrete solvers hardly could ``catch'' the intrinsic geometry of the underlining distribution. When we apply discrete GW solver, the Gromov-Wasserstein mapping is learned between the geometries induced by sample distributions, not original distributions, see Figure \ref{fig:discrete_gw_issue} for illustration. These ``induced'' geometries may be different from the original ones, they may have other symmetries and other properties. Matching them may result in Gromov-Wasserstein map which is quite different from the real GW map. Importantly, our hypothesis above is (indirectly) supported by recently derived sample complexity rates for Gromov-Wasserstein distance \citep{zhang2024gromov}. The authors consider GWOT problem \eqref{eq:gwot-problem} for quadratic costs $c_{\mathcal{X}}, c_{\mathcal{Y}}$. In this case, they observe that the sample convergence rate of the GW distance between empirical (sample-based) measures to the GW distance between reference measures ($\bbP, \bbQ$) is of the order $N^{\frac{-2}{\min{d_{\X}, d_\Y}}}$, where $N$ is the number of samples. Note that this discouraging exponential rate corresponds exactly to uncorrelated ($\alpha \approx 0$) setup, where the discrete samples from the source and target reference distributions are independent. Besides, the sample complexity of \textit{GW maps}, not distances, may be even worse.

On the other hand, when correlatedness level is high ($\alpha = 1$), the GW problem is reduced to finding the proper permutation of the data, see Figure \ref{fig:discrete_gw_issue}, left part. The true solution of discrete GW in this case \textit{coincides} with the true underlining GW map. If the learned map properly generalizes to new (unseen) samples, then the resulting performance is expected to be satisfactory. It is the behaviour we observe in our experiments, Section \ref{subsec:benchmark}.

\subsection{Attempts to stabilize NeuralGW}

Here we provide a summary on our attempts to stabilize the training of NeuralGW.

\textbf{Input Convex Neural Networks (ICNN) as potential $f$.} Theoretically, the use of ICNN to parameterize the potential $f$ should improve the performance of the solver. However; the implementation of this parameterization as part of the NeuralGW solver provided not good results as well as an increased training time.

\textbf{Cayley transform to parameterize the matrix $P$.} We used a modified version of the Cayley transform to ensure the orthogonality of the matrix $P$, this allows us to have more control over the eigenvalues of the matrix, however, it also required more parameters to set. The results obtained were worse than the ones in the paper and the stability issue was not fixed.

\textbf{Data projection and normalization}. We also trained the solver on PCA and random Gaussian projections without any positive results. We obtained a similar outcome when training on normalized data.

\textbf{Larger batch sizes}. We trained using large batch sizes (i.e. 1024, 2048). This approach aims to better capture the cluster structure of the data by taking more samples at each training iteration. A slight improvement was provided, but it is not a robust choice across all setups.

\end{document}